\newcommand{\declarecolor}[2]{\definecolor{#1}{RGB}{#2}\expandafter\newcommand\csname #1\endcsname[1]{\textcolor{#1}{##1}}}
\definecolor{Green}{rgb}{0.05, 0.35, 0.01}
\newcommand{\R}{\mathbb{R}}
\renewcommand{\emptyset}{\varnothing}
\DeclareMathOperator*{\argmin}{arg\,min\,}
\renewcommand{\P}{\textnormal{P}}
\newcommand{\NP}{\textnormal{NP}}
\newcommand{\MTPP}{\textnormal{\textsf{MTPP}}\xspace}
\newcommand{\cost}{\textnormal{\texttt{work}}\xspace}
\newcommand{\size}{\ensuremath{\texttt{size}_\texttt{param}}\xspace}
\newcommand{\outputsize}{\ensuremath{\texttt{size}_\texttt{out}}\xspace}
\newcommand{\iocost}{\textnormal{\texttt{io}\xspace}}
\newcommand{\overflowcost}{\textnormal{\texttt{overflow}}\xspace}
\newcommand{\OPT}{\textnormal{OPT}\xspace}
\newcommand{\mat}[1]{\mathbf{#1}}
\newcommand{\outnbr}{\ensuremath{N^+}}
\newcommand{\innbr}{\ensuremath{N^-}}
\newcommand{\ie}{i.e.,\xspace}
\newcommand{\eg}{e.g.,\xspace}
\newcommand{\peakmem}{\texttt{peak}\xspace}
\newcommand{\blockmem}{M\xspace}
\newcommand{\khat}{\ensuremath{k'}}
\newcommand{\nhat}{\ensuremath{r}}
\newcommand{\bhat}{\ensuremath{b'}}
\newcommand{\MyComment}[1]{\texttt{\textcolor{Green}{// #1}}}
\newcommand{\SegmentCostDataStructure}{\textnormal{\texttt{SegmentCostDataStructure}}\xspace}
\newcommand{\Initialize}{\textnormal{\texttt{Initialize}}\xspace}
\newcommand{\Query}{\textnormal{\texttt{Query}}\xspace}
\newcommand{\SliceGraph}{\textnormal{\texttt{SliceGraph}}\xspace}
\newcommand{\Solve}{\textnormal{\texttt{DP}}\xspace}
\newcommand{\Kahn}{\textnormal{\texttt{KahnWithNodePriorities}}\xspace}
\newcommand{\blockcost}{\textnormal{\texttt{block}}\xspace}
\newcommand{\simple}{\textnormal{\texttt{simple}}\xspace}
\newcommand{\bottleneck}{\textnormal{\texttt{bottleneck}}\xspace}
\newcommand{\bottleguess}{\textnormal{\texttt{bottleneck-guess}}\xspace}
\newcommand{\exact}{\textnormal{\texttt{exact}}\xspace}
 \newcommand{\cP}{\mathcal{P}}
\DeclarePairedDelimiter{\abs}{\lvert}{\rvert}
\DeclarePairedDelimiter{\set}{\{}{\}}
\DeclarePairedDelimiter{\parens}{(}{)}
\theoremstyle{plain}
\newtheorem{theorem}{Theorem}[section]
\newtheorem{lemma}[theorem]{Lemma}
\newtheorem{warmup}[theorem]{Warmup}
\theoremstyle{definition}
\newtheorem{definition}[theorem]{Definition}
\newtheorem{remark}[theorem]{Remark}
\icmltitlerunning{Practical Performance Guarantees for Pipelined DNN Inference}
\begin{document}

\twocolumn[
\icmltitle{Practical Performance Guarantees for Pipelined DNN Inference}

\icmlsetsymbol{equal}{*}

\begin{icmlauthorlist}
\icmlauthor{Aaron Archer}{equal,google}
\icmlauthor{Matthew Fahrbach}{equal,google}
\icmlauthor{Kuikui Liu}{mit}
\icmlauthor{Prakash Prabhu}{google}
\end{icmlauthorlist}

\icmlaffiliation{google}{Google}
\icmlaffiliation{mit}{MIT}

\icmlcorrespondingauthor{Matthew Fahrbach}{fahrbach@google.com}

\icmlkeywords{Machine Learning, ICML}

\vskip 0.3in
]

\printAffiliationsAndNotice{\icmlEqualContribution} 

\begin{abstract}
We optimize pipeline parallelism for deep neural network (DNN) 
inference by partitioning model graphs into $k$ stages and 
minimizing the running time of the bottleneck stage, including 
communication.
We give practical and  effective algorithms 
for this NP-hard problem, but our emphasis is on tackling the 
practitioner's dilemma of deciding when a solution is good enough.
To this end, we design novel mixed-integer programming (MIP)
relaxations for proving lower bounds. Applying these methods to 
a diverse testbed of 369 production models, for $k \in \{2, 4, 8, 16, 32, 64\}$,
we empirically show that 
these lower bounds are strong enough to be useful in practice.
Our lower bounds are substantially stronger than standard combinatorial bounds.
For example, evaluated via 
geometric means across a production testbed with $k = 16$ pipeline 
stages, our MIP formulations raise the lower bound from 0.4598 
to 0.9452, expressed as a fraction of the best partition found.
In other words, our improved lower bounds close the optimality gap by a factor of 9.855x.
\end{abstract}

\section{Introduction}
\label{sec:introduction}

Large-scale machine learning (ML) workloads rely on distributed systems and specialized hardware accelerators, \eg graphics 
processing units (GPUs) and tensor processing units (TPUs).
Fully utilizing this hardware, however,
remains an increasingly important challenge.
ML accelerators have a small amount of \emph{fast memory} co-located with each 
computational unit (CU),
and a much larger amount of \emph{slow memory} that is accessed via an interconnect shared among the CUs.
Achieving peak performance for deep neural network (DNN) training and inference
requires the ML compiler and/or practitioner
to pay significant attention to where intermediate
data is stored and how it flows between CUs.
This work addresses pipeline partitioning for DNNs to maximize 
\emph{inference throughput}, with a particular focus on lower 
bound methods for proving per-instance approximation ratios.

ML inference handles two main types of data:
\emph{model parameters} (weights learned during training) and \emph{activations}
(intermediate outputs of the model, \eg from hidden layers).
Keeping activations in fast memory (\eg SRAM) is critical, so ML compilers
often treat it as a hard constraint.
Model parameters can be streamed from slow memory,
but caching them in fast memory greatly boosts performance.
When we partition an end-to-end inference computation into a linear pipeline 
with $k$ stages and process each stage on a different CU,
we \emph{increase the amount of fast memory} at our disposal by a factor of~$k$,
allowing us to cache more parameters
and support larger activations
(and hence larger models and/or batch sizes).
However, this introduces two main challenges:
(1) CUs must send their outputs downstream,
often via a slow and contended data channel,
so we need to minimize communication overhead;
and (2) we must balance the running time across all stages of the partition
since the \emph{overall throughput is governed by the bottleneck stage}.

\begin{figure}[t]
\centering
\vspace{-0.40cm}
\includegraphics[width=0.48\textwidth]{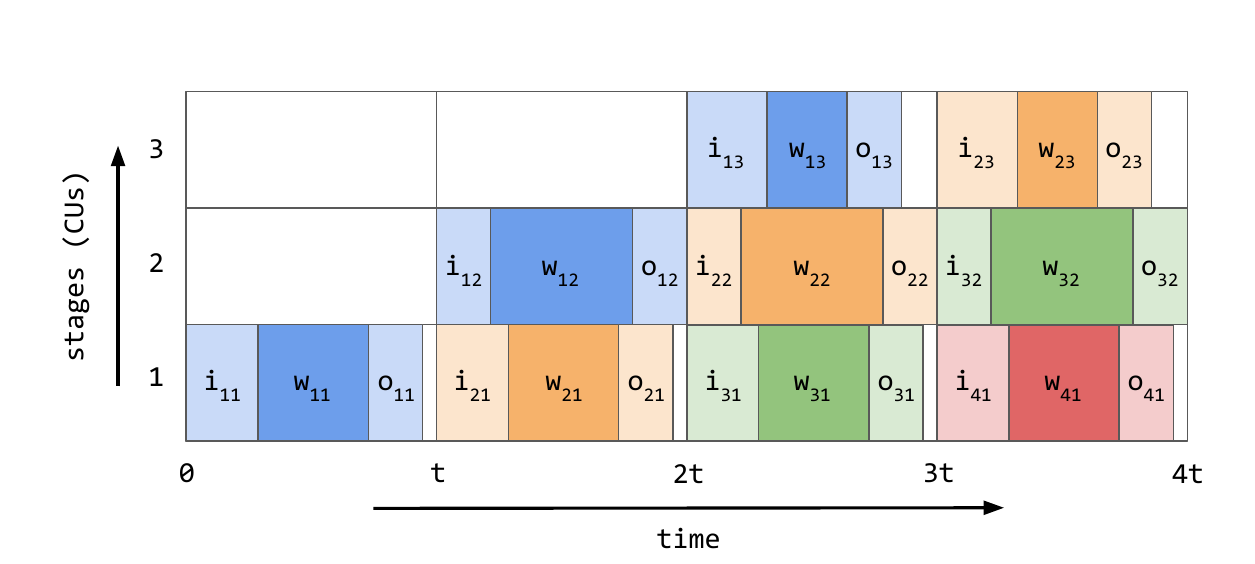}
\vspace{-0.85cm}
\caption{Inference pipeline from startup to steady state with $k=3$ stages.
Each inference batch is represented with the same color as it advances through the pipeline.
Values $\texttt{i}_{b \ell}$, $\texttt{w}_{b \ell}$, $\texttt{o}_{b \ell}$
are the times needed for stage $\ell$ to get its input for batch $b$,
process it, and flush its output.
Stage~$2$ is the bottleneck, \ie $t = \texttt{i}_{*2} + \texttt{w}_{*2} + \texttt{o}_{*2}$,
and limits system throughput.
Empty space (white) denotes idle time.
}
\vspace{-0.5cm}
\label{fig:pipeline-example}
\end{figure}

\subsection{Practioner's dilemma}
\label{sec:dilemma}

Suppose you are an ML engineer who has been tasked with partitioning a model graph for pipelined inference as illustrated in \Cref{fig:pipeline-example}.
We explain the practitioner's dilemma with a toy example.
Assume you are searching for the best way to partition a model among 8 processors, and you discover a solution where the bottleneck processor takes 10ms to finish. In this case, the pipeline finishes one inference every 10ms, so the throughput is 1 inference / 10ms = 100 inferences per second, and the latency of a single inference is 8 x 10ms = 80ms. Is this a good solution? How do you know there isn't one that is ten times better?

Suppose now that you devise an approximation algorithm and prove it has a worst-case \emph{approximation ratio} of 2. Congratulations, proving a worst-case bound is often no easy feat! You run your algorithm on the same instance as before, generating a partition with a bottleneck stage of 12ms, and a \emph{lower bound} of 6ms. Is this a good result? What if your boss tells you that your company is about to spend millions of dollars on hardware to run your inference pipeline. If you can improve your 12ms solution all the way down to 6ms, then you can save half of this hardware or run twice as many more inferences.
This is great motivation to improve your solution, but how do you know when to stop?
\emph{It could be that the lower bound is weak, not your solution.}

Two things went wrong here. Your algorithm and your lower bound are robust to all inputs, but you have a particular instance in front of you, and that is all you care about. If you could run a different heuristic to output a solution with a bottleneck stage of 10ms, along with a lower bound certificate for this instance of 9.5ms, that means it is impossible to improve the solution by more than 5\%. Presumably, this will make both you and your boss happier, and suggest that you can now spend your time on something else.

In practice, we care about good performance on a whole family of instances. Partitioning algorithms run inside ML compilers,
often with tight time constraints that preclude computing strong lower bounds in the compiler itself. In this case, one way to gain confidence in the quality of partitioning algorithms is to create a testbed of instances that are representative of the ones we solve in practice,
run our algorithms to generate solutions and lower bounds offline, and examine the approximation ratios we were able to prove for these instances.
If the average per-instance approximation ratio is 1.05 on the testbed, we argue this should give us more confidence in the partitioning algorithm than would the proof that a different algorithm has worst-case ratio 2. If our lower bounds are fast enough to run within the compiler's time limits, that is even better, as we can then bound the suboptimality of each instance, rather than trusting that the testbed results generalize to the instance at hand.

This ethos motivates the focus of our paper: we describe a sequence of successively stronger (but costlier to compute) lower bound methods that can be used to prove per-instance lower bounds for the pipeline partitioning problem.
We also give algorithms for constructing partitions for the original problem, and show that the cost of these partitions is close to the lower bounds
across a testbed of hundreds of production models with a variety of architectures.

\subsection{Our contributions and techniques}
The main contributions of this work are as follows:
\begin{enumerate}
    \item We formalize the \emph{max-throughput partitioning problem} (\MTPP) for pipelined inference,
    and we prove that it is NP-hard.
    We then formulate a novel mixed-integer program (MIP) for \MTPP,
    and study sparse relaxations
    to obtain strong lower bounds (\Cref{sec:lower_bounds}).
    
    \item We give a fast and practical pipeline partitioning algorithm called \SliceGraph
    that combines dynamic programming with a biased random-key genetic algorithm (\Cref{sec:algorithm}).

    \item We present extensive experiments across real and synthetic
    model graphs for a wide variety of ML architectures and workloads (\Cref{sec:experiments}).
    Using our (\emph{a posteriori}) MIP lower bounds, we demonstrate 
    that \SliceGraph is highly effective in practice,
    \eg for $k \leq 16$, our strongest lower bound is (on average) 95.5\% of the \SliceGraph solution, whereas the 
    standard combinatorial lower bound is only 46.0\%. 
\end{enumerate}

\section{Preliminaries}
\label{sec:preliminaries}

To further build intuition,
it is helpful to think of pipelined inference as an assembly line
where the model is split into $k$ stages and the inputs for each stage
are produced earlier in the assembly line.
If $t$ is the running time of the longest stage (\ie the \emph{bottleneck}),
each stage can finish its local computation in parallel in time $t$.
Every~$t$ units of time
we advance each batch one stage forward in the assembly line (see \Cref{fig:pipeline-example}),
so the end-to-end latency for a batch of inferences is $kt$
and the throughput of the system is $\sfrac{1}{t}$ (\ie one batch per $t$ units of time).
To maximize system throughput, it is critical to the partition work in a way
that minimizes the bottleneck time $t$.

\begin{figure*}
\centering
\vspace{0.1cm}
\includegraphics[width=0.95\textwidth]{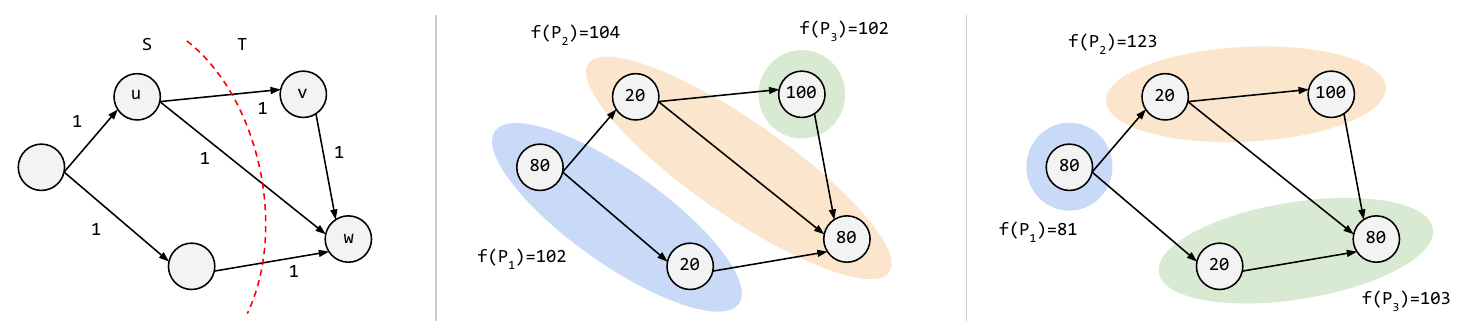}
\vspace{-0.1cm}
\caption{Partitioning computation graphs:
(left) tensor cut property where $\iocost(S, T) = 2$ because $v$ and $w$ consume the same tensor;
(middle) invalid partition because blocks $P_2$ and $P_3$ form a cycle in the quotient graph;
(right) valid partition with block costs for $k=3$.}
\label{fig:graph_partitions}
\end{figure*}

\subsection{Computation graphs}

An ML model is commonly represented as a
computation graph $G = (V, E)$, where $V$ is the set of node operations
(called \emph{ops})
and $E \subseteq V \times V$ are the data flow edges.
Let $n = |V|$ and $m = |E|$.
For simplicity, assume each op~$u$ outputs 
one tensor consumed by (possibly many) downstream ops $v$,
which we represent by the edges $(u,v)$.
This corresponds to a \texttt{tensorflow.Graph}~\citep{abadi2016tensorflow},
and has analogs in MLIR~\citep{lattner2021mlir}, MXNet~\citep{chen2015mxnet}, and PyTorch~\citep{paszke2019pytorch}.

We introduce a few node weights to help model the cost of inference:
\begin{itemize}
    \item $\cost(v)$ is the running time of $v \in V$.
        This is typically estimated with an analytic or learned
        cost model~\citep{kaufman2021learned}.
    \item $\size(v)$ is the memory footprint of the model parameters that $v \in V$ uses.
        For example, if $v$ is a matrix multiplication op,
        $\size(v)$ is the storage cost for the entries of the matrix.
    \item $\outputsize(v) $ is the size of the output of $v$ (\eg in bytes).
\end{itemize}

We use standard graph theory notation to denote dependencies between nodes:
\begin{itemize}
    \item $\innbr(v) = \{u \in V : (u, v) \in E\}$
        is the set of nodes whose output is consumed by $v$.
    \item $\outnbr(v) = \{w \in V : (v, w) \in E\}$
        is the set of nodes that consume the output of $v$.
    \item 
        $\innbr(S) = \bigcup_{v \in S} \innbr(v) \setminus S$ and
        $\outnbr(S) = \bigcup_{v \in S} \outnbr(v) \setminus S$
        extend the neighborhood notation to sets of nodes.
\end{itemize}

The reason for excluding $S$ from the neighborhoods
will become clear when we consider the inter-block communication costs for a partition of $G$.

\subsection{Problem statement}

\paragraph{Acyclic quotient graph constraint}
Let $\cP_{k}(G)$ be the set of partitions of $V$ into $k$ blocks (possibly empty)
such that the induced \emph{quotient graph} of $G$ is acyclic.
Formally, let $P=\{P_1, P_2, \dots, P_k\}$ be a partition of $V$,
i.e.,
$P_1 \cup P_2 \cup \dots \cup P_k = V$ and
$P_i \cap P_j = \emptyset$ for all $i \ne j$.
Since~$P_i$ can be empty,
we can think of partitioning $V$ into at most~$k$ blocks.
For each $v \in V$, let $[v]_P$ denote the block in $P$ containing~$v$.
The quotient graph $Q = (P, E')$ for partition~$P$ has
blocks of $P$ as its nodes
and
reduced edge set
$E' = \{([u]_P, [v]_P) : (u, v) \in E \text{~and~} [u]_P \ne [v]_P\}$.
We require $Q$ to be acyclic so that there is valid data flow
when~$G$ is partitioned across different processors.

\paragraph{Inter-block communication}
Let $B$ be the bandwidth of the interconnect between blocks.
For disjoint sets $S, T \subseteq V$,
the IO cost (in units of time) from $S$ to~$T$ is
\begin{equation}
\label{eqn:iocost_def}
    \iocost(S, T) = \frac{1}{B} \sum_{v \in \innbr(T) \cap S} \outputsize(v).
\end{equation}
We overload singleton notation:
$\iocost(u, v) = \iocost(\{u\},\{v\})$.

By summing over the set of producer ops $v \in \innbr(T) \cap S$,
each tensor going from $S$ to $T$ is counted once,
\emph{even if it has many consumers in $T$.}
This is different from the cost of a traditional edge-cut set since it
considers only one edge in each tensor edge equivalence class (see \Cref{fig:graph_partitions}).
Refining how the cost of communication is modeled is where computation graph
partitioning begins to deviate from more familiar
cut-based graph partitioning problems.

\paragraph{Streaming model parameters}
Each block is a computational unit
with a fixed amount of fast memory,
e.g., SRAM for GPUs
and multi-chip packages~\citep{mei2016dissecting,gao2020estimating,dasari2021apparatus}
and high-bandwidth memory for TPUs~\citep{jouppi2017datacenter,jouppi2023tpu}.
To achieve peak performance,
it is essential that all model parameters assigned to a block be fully cached.
Otherwise, some of these parameters must be streamed to the block during each inference batch
from slow memory, e.g., shared DRAM.
Inter-block bandwidth is typically at least an order of magnitude
slower than intra-block bandwidth~\citep{dao2022flashattention},
so we ignore the communication between ops within a block
and refer to the time needed to stream parameters
that spill over as the \emph{overflow cost} of a block.

There are two key factors for deciding if all model parameters can be fully cached on a block:
(1) the size of its fast memory, and
(2) the peak activation memory.
We start by describing the peak memory scheduling problem~\citep{marchal2019limiting, paliwal2019reinforced, ahn2020ordering,lin2021mcunetv2,vee2021scheduling,zhang2022tensile,fradet2023sequential,jin2023new}.
For a set of ops $P_i$,
the peak memory scheduling problem is to
find a linear execution order of $v \in P_i$
minimizing the amount of working memory
needed for all intermediate computations.
Once we know how much fast memory to reserve for the activations,
we allocate the rest for caching parameters.

The overflow cost for a set of ops $S \subseteq V$
on a block with~$M$ fast memory,
peak memory $\peakmem(S)$, and
inter-block bandwidth $B$ is
\begin{align}
\label{eqn:overflow-cost}
    &\overflowcost(S)
    =
    \frac{\parens*{
        \size(S)
    + \peakmem(S) - \blockmem}^{+}}{B},
\end{align}
where we use the notation $x^+ = \max(x, 0)$.

\paragraph{Total block cost}
The total cost of a block with ops $S \subseteq V$ (\ie its running time)
in a pipeline partition is
\begin{align}
\label{eqn:full-block-cost}
    f(S) &=
        \overbrace{\iocost(V \setminus S, S)}^{\text{input tensors}}
        +
        \sum_{v \in S} \cost(v)
        +
        \overflowcost(S) \notag \\
        &+
        \overbrace{\iocost(S, V \setminus S)}^{\text{output tensors}}.
\end{align}
Putting everything together, we arrive at the following min-max objective function.


\begin{definition}
\label{def:max_throughput_partitioning}
For a computation graph $G$ and number of blocks $k$,
the \emph{max-throughput partitioning problem} (\MTPP) is 
\begin{equation}
\label{eqn:optimization_problem}
    P^* = \argmin_{P \in \cP_k(G)} \set*{ \max_{i \in [k]} f(P_i) },
\end{equation}
where $[k] = \{1,\dots,k\}$.
Let $\OPT = \max_{i \in [k]} f(P^*_i)$ denote the minimum bottleneck cost.
\end{definition}

\begin{remark}
There are \emph{many more} moving parts to ML performance than partitioning model graphs---it is just one of many ML compiler passes
(\eg op fusion, tensor sharding, fine-grained subgraph partitioning, peak memory scheduling).
However, it is one of the highest-order components
with a significant impact on overall system efficiency.
\end{remark}

\section{Mixed-integer programming lower bounds}
\label{sec:lower_bounds}

We first prove that \MTPP is NP-hard and cannot have a fully polynomial-time approximation scheme (FPTAS), unless $\P = \NP$,
by giving a reduction from the minimum makespan scheduling problem on $k$ identical parallel processors,
which is strongly NP-hard~\citep{hochbaum1987using}.
We defer all proofs in this section to \Cref{app:lower_bounds}.

\begin{restatable}{theorem}{TheoremHardness}
\label{thm:hardness}
For $k=2$, \MTPP is \NP-hard.
Furthermore, there does not exist a fully polynomial-time approximation scheme for \MTPP, unless $\P = \NP$.
\end{restatable}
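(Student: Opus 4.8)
The plan is to reduce from minimum makespan scheduling on identical parallel machines, exploiting the fact that \MTPP collapses to exactly this problem when the input graph has no edges. Given a scheduling instance with integer job lengths $p_1, \dots, p_n$ and $m$ machines, I would construct a computation graph $G = (V, E)$ with one node $v_j$ per job, set $\cost(v_j) = p_j$, and crucially take $E = \emptyset$. I would also set $\outputsize(v_j) = 0$, $\size(v_j) = 0$, and $\peakmem(\cdot) = 0$ (and fix bandwidth $B = 1$), so that the only surviving term in the block cost $f$ is the sum of op running times. Finally I set the number of blocks to $k = m$.

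With this construction the verification is routine. Since $E = \emptyset$, the reduced edge set is empty, so the quotient graph of every partition is trivially acyclic and $\cP_k(G)$ is the set of all partitions of $V$ into $k$ blocks. For any block $P_i$, the terms $\iocost(V \setminus P_i, P_i)$ and $\iocost(P_i, V \setminus P_i)$ vanish because no edges cross, and $\overflowcost(P_i) = 0$ by the memory choices, leaving $f(P_i) = \sum_{v_j \in P_i} p_j$, the load of machine $i$. Hence a partition minimizing $\max_i f(P_i)$ is exactly a schedule minimizing the makespan, and $\OPT$ equals the optimal makespan. Specializing to $k = 2$, balancing $p_1, \dots, p_n$ across two machines is \NP-hard (it is the \textsc{Partition} problem), which establishes the first claim.

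For the absence of an FPTAS I would let the number of machines $k$ be part of the input and invoke the \emph{strong} \NP-hardness of makespan scheduling on $k$ identical machines. The reduction above preserves objective values exactly, and with integer lengths and $B = 1$ the cost $f$ is integer-valued with $\OPT \le T := \sum_j p_j$. Suppose for contradiction that \MTPP admitted an FPTAS. Running it with accuracy $\epsilon = 1/(T+1)$ would produce a partition of value at most $(1 + \epsilon)\,\OPT < \OPT + 1$; since this value is an integer, it must equal $\OPT$ exactly. Because makespan scheduling is strongly \NP-hard, we may assume the $p_j$ are bounded by a polynomial in the input size, so $T$, and hence $1/\epsilon$, is polynomially bounded and the FPTAS runs in polynomial time. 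This would solve a strongly \NP-hard problem exactly in polynomial time, forcing $\P = \NP$.

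The main thing to get right is ensuring that every auxiliary cost term in $f$ genuinely vanishes, so that the reduction is an exact correspondence rather than an approximate one: the whole argument hinges on $f(P_i)$ equalling the machine load with no slack, which is why I take an edgeless graph and zero out the parameter, activation, and output sizes. The only other point needing care is the standard rounding step in the no-FPTAS argument, where I must confirm that integrality of $f$ together with the polynomial bound on $\OPT$ (available precisely because the source problem is strongly \NP-hard) lets an FPTAS recover the exact optimum.
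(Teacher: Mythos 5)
Your proposal is correct and follows essentially the same route as the paper: a reduction from minimum makespan scheduling via an edgeless computation graph with $\cost(v_j) = p_j$, invoking the \textsc{Partition} problem for $k=2$ and strong \NP-hardness to rule out an FPTAS. You simply spell out in more detail the steps the paper leaves implicit (zeroing the auxiliary cost terms and the standard rounding argument), which is fine.
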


\begin{figure*}
\begin{tcolorbox}[enhanced, opacityframe=1, colback=white!99!black, top=-0.8mm]
\begin{align}
    \text{minimize~~} &\bottleneck \label{eqn:exact_mip} \\
    \text{such that~~} 
        & \bottleneck \ge \blockcost_b  \quad \forall b \in [k] \label{eqn:bottleneck_lower_bounds} \\
        & \blockcost_b = \sum_{v \in V} \cost(v) \cdot x_{vb} + \frac{1}{B} \sum_{u \in V} \outputsize(u) \cdot c_{ub}  \quad \forall b \in [k] \label{eqn:block_cost_def} \\
        & y_{ub} \ge y_{vb} \quad \hspace{2.102cm} \forall (u,v) \in E, b \in [k]
          \hspace{0.5cm} \MyComment{DAG constraints} \label{eqn:dag_constraints} \\
        & c_{u b} \ge y_{u(b-1)} + x_{vb} - 1 \quad \forall(u,v) \in E, b \in [k]
          \hspace{0.5cm} \MyComment{cut input tensors} \label{eqn:input-tensor-cut} \\ 
        & c_{u b} \ge x_{ub} - y_{vb} \quad \hspace{1.185cm} \forall(u,v) \in E, b \in [k]
          \hspace{0.5cm} \MyComment{cut output tensors} \label{eqn:output-tensor-cut} \\
        & c_{u b} \ge 0 \quad \hspace{2.411cm} \forall(u,v) \in E, b \in [k] \notag \\
        & y_{v(b-1)} \le y_{vb} \quad \hspace{1.54cm} \forall v \in V, b \in [k] \notag \\
        & y_{v0} = 0 \quad \hspace{2.39cm} \forall v \in V \hspace{2.2cm} \MyComment{convenience variable} \notag \\
        & y_{vk} = 1 \quad \hspace{2.375cm} \forall v \in V \hspace{2.2cm} \MyComment{boundary condition} \notag \\
        & x_{vb} = y_{vb} - y_{vb-1} \quad \hspace{0.84cm} \forall v \in V, b \in [k] \label{eqn:xy-sub} \\
        & y_{vb}, x_{vb}, c_{vb} \in \{0, 1\} \quad \hspace{0.52cm} \forall v \in V, b \in [k] \label{eqn:binary}
\end{align}
\end{tcolorbox}
\caption{Exact MIP for solving \MTPP, where variables $x_{vb} \in \{0,1\}$ indicate whether node $v \in V$ is assigned to block $b \in [k]$.}
\label{fig:mip}
\end{figure*}

In light of this hardness,
we formulate a novel mixed-integer program to solve \MTPP
and focus on deriving strong lower bounds.
Even for medium-sized models and moderate values of $k$,
the exact program pushes the limits of MIP solvers,
so we relax the formulation to give strong lower bounds
while using fewer variables, constraints, and non-zeros.
The exact MIP in \eqref{eqn:exact_mip} and its relaxations are the main theoretical contribution of
our work, allowing us to provide strong per-instance approximation guarantees.

To simplify the presentation, we ignore the \overflowcost terms in 
\Cref{eqn:full-block-cost} since $\peakmem(S)$ depends on how the ops in a block are scheduled~\citep{paliwal2019reinforced}.
This is equivalent to reserving a buffer for activations in each block
and treating the remaining amount of fast memory as the new budget.

\subsection{Exact MIP formulation} \label{subsec:exact-mip}

We now present the MIP for solving \MTPP in \Cref{fig:mip}.
The main idea is to number the blocks from $1$ to~$k$ in DAG order
(i.e., a topological order of the induced quotient graph)
and use binary decision variables to assign nodes to blocks. 

\paragraph{Decision variables}
There are $O(nk)$ binary variables:
\begin{itemize}
    \item $x_{vb}$ indicates whether node $v \in V$ is assigned to block $b \in [k]$.

    \item $y_{vb}$ indicates whether node $v \in V$ is assigned to some block at or before $b$.
        For any feasible assignment, this means $y_{vk} = 1$, for all $v \in V$,
        and $y_{v(b-1)} \le y_{vb}$, for all $v \in V, b \in [k]$.
        We let $y_{v0} = 0$ for notational convenience.
        
    \item $c_{ub}$ indicates whether any edge $(u,v) \in E$,
    corresponding to the tensor that $u$ produces, flows \emph{into or out of} block $b$.
\end{itemize}

All decision variables are nominally binary in \Cref{eqn:binary}, 
but we can relax the $x$ and $c$ variables to $[0,\infty)$ since they naturally 
lie in $\{0,1\}$ whenever the $y$ variables do.

The $x$ and $y$ variables represent the same information in two ways, and hence are redundant.
Using both, however, allows us to express some constraints more naturally.
In our code, we use \Cref{eqn:xy-sub} 
to eliminate each occurrence of $x_{vb}$.
Doing so offers two advantages relative to eliminating the~$y$ variables. 
First, the tensor cut constraints
require $O(mk^2)$ non-zeros if expressed purely in terms of the $x$ variables. 
Second, and more crucial, branching on the $y$ variables
works \emph{in tandem} with the acyclicity constraints to create
more asymmetry in the branch-and-bound process,
allowing the MIPs to solve faster
compared to branching on the $x$ variables.

\paragraph{Objective value}
The auxiliary $\bottleneck$ variable allows us to minimize the max block cost
via constraint \Cref{eqn:bottleneck_lower_bounds}.
The $\blockcost_b$ variables defined in~\Cref{eqn:block_cost_def}
capture the total node cost assigned to block $b$
plus the induced cut costs, counting each tensor edge in the cut-set exactly once.

\paragraph{DAG constraints}
Given the ``completed-by-block $b$'' variables $y_{vb}$,
we use constraint \Cref{eqn:dag_constraints} to force the quotient graph to be acyclic.
If $(u,v) \in E$ and $v$ is assigned to block~$b$ or earlier, the DAG constraints guarantee that
$u$ is also assigned to block $b$ or earlier, so \Cref{eqn:dag_constraints} holds, and conversely.

\paragraph{Tensor edge-cut constraints}
Each tensor $\tau$ can be represented by multiple edges in $G$,
each with the same source node $u(\tau)$. If \emph{any} of these edges is cut by block $b$,
we must set $c_{ub} = 1$.
Constraint~\Cref{eqn:input-tensor-cut} captures the case where an
edge $(u,v)$ flows into block $b$ from the left---namely that when $v$ is assigned to block 
$b$ (\ie $x_{vb} = 1$) and $u$ is assigned to an earlier block (\ie $y_{u(b-1)} = 1$), 
then $c_{ub}$ is forced to be 1, and otherwise it is not. 
For outgoing edges, if $u$ is assigned to block $b$ (\ie $x_{ub} = 1$) and $v$ is assigned to a 
later block (\ie $y_{vb} = 0$), then constraint~\Cref{eqn:output-tensor-cut} forces $c_{ub} = 1.$

\begin{restatable}{theorem}{TheoremExactMIP}\label{thm:exact-mip}
The mixed-integer program in Eq.~\Cref{eqn:exact_mip} solves the max-throughput partitioning problem
using $O(nk)$ variables, $O(mk)$ constraints, and $O(mk)$ non-zeros.
\end{restatable}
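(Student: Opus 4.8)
The plan is to prove correctness and the resource counts separately. For correctness I will exhibit a value-preserving correspondence between feasible partitions $P \in \cP_k(G)$ and feasible integral assignments of the MIP, and then argue that the MIP optimum equals $\OPT$. First I would fix the interpretation of the variables: given a partition whose quotient graph is acyclic, I number its blocks $1,\dots,k$ along a topological order of $Q$, write $\beta(v)$ for the index of $v$'s block, and set $y_{vb} = \mathbf{1}[\beta(v) \le b]$ and $x_{vb} = \mathbf{1}[\beta(v)=b]$. I would check that this satisfies the monotonicity constraints, the boundary conditions $y_{v0}=0$ and $y_{vk}=1$, and \Cref{eqn:xy-sub}, all of which are immediate. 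The crux of this direction is the DAG constraint \Cref{eqn:dag_constraints}: I would show that $y_{ub}\ge y_{vb}$ for every edge $(u,v)$ is equivalent to $\beta(u)\le\beta(v)$, i.e.\ to the block numbering being a topological order of $Q$, which exists precisely when $Q$ is acyclic. Conversely, any feasible $y$-assignment defines $\beta(v)=\min\{b: y_{vb}=1\}$ (well defined since $y_{vk}=1$ and $y$ is monotone), and \Cref{eqn:dag_constraints} forces this numbering to respect all edges, so the induced quotient graph is acyclic and the assignment corresponds to a genuine member of $\cP_k(G)$.

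Next I would match the objective. The key step is to show that at any optimum the variable $c_{ub}$ equals the indicator that the tensor produced by $u$ crosses the boundary of block $b$. Since $c_{ub}$ appears only with the nonnegative coefficient $\outputsize(u)/B$ in a minimization, at optimality it takes the smallest value allowed by \Cref{eqn:input-tensor-cut,eqn:output-tensor-cut} together with $c_{ub}\ge 0$. I would verify that \Cref{eqn:input-tensor-cut} fires exactly when some consumer $v$ lies in block $b$ while $u$ lies in an earlier block (an incoming tensor), and \Cref{eqn:output-tensor-cut} fires exactly when $u$ lies in block $b$ while some consumer $v$ lies in a later block (an outgoing tensor); the DAG constraints guarantee these are the only ways a tensor of $u$ can cross the boundary of block $b$. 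Because each tensor has a single producer $u$, the single variable $c_{ub}$ charges $\outputsize(u)$ exactly once per crossing, which is precisely how $\iocost$ in \Cref{eqn:iocost_def} sums over producers. Summing the input and output contributions with the node running times then gives $\blockcost_b = f(P_b)$ with the \overflowcost term dropped, and \Cref{eqn:bottleneck_lower_bounds} makes $\bottleneck = \max_b \blockcost_b$ at optimality. Combining the two directions, every partition yields a feasible point of matching cost and every feasible point yields a partition of matching cost, so the MIP optimum equals $\OPT$.

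Finally I would count resources, which is routine. The variables are $x_{vb}, y_{vb}, c_{ub}$ over $v\in V$ (equivalently producers $u$) and $b\in[k]$, plus $\bottleneck$ and the $k$ auxiliary $\blockcost_b$, giving $O(nk)$ in total; eliminating $x$ via \Cref{eqn:xy-sub} only reduces this. For constraints, \Cref{eqn:dag_constraints,eqn:input-tensor-cut,eqn:output-tensor-cut} are indexed by $(e,b)\in E\times[k]$ and dominate at $O(mk)$, while the monotonicity, boundary, and bottleneck constraints contribute only $O(nk)$ and $O(k)$. For non-zeros, each DAG and tensor-cut constraint has $O(1)$ non-zeros for a total of $O(mk)$, the block-cost definitions \Cref{eqn:block_cost_def} contribute $O(nk)$, and the remaining definitional constraints contribute $O(nk)$; assuming no isolated ops (so $n=O(m)$, as holds for any connected computation graph) this is $O(mk)$.

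I expect the main obstacle to be the objective-matching step: carefully arguing that the one-variable-per-producer encoding $c_{ub}$ charges each cut tensor exactly once, and that the minimization drives $c_{ub}$ down to the true crossing indicator, so that $\blockcost_b$ coincides with $f(P_b)$ rather than over- or under-counting tensors with multiple consumers. The DAG-versus-acyclicity equivalence is the other place needing care, but it follows from the standard topological-order characterization of acyclic graphs.
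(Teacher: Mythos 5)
Your proposal is correct and follows essentially the same route as the paper's proof: a value-preserving two-way correspondence between partitions (via a topological numbering of the quotient graph's blocks) and feasible integral assignments, the observation that minimization drives $c_{ub}$ and $\bottleneck$ down to the maxima of their lower bounds so that $\blockcost_b$ matches $f(P_b)$ with each cut tensor charged once through its unique producer, and the same routine counting of variables, constraints, and non-zeros. Your explicit remark that the $O(nk)$ non-zeros from \Cref{eqn:block_cost_def} fold into $O(mk)$ only when $n = O(m)$ is a small point of extra care that the paper's proof glosses over.
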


\subsection{Relaxing to a three-superblock formulation}
\label{subsec:relax_to_three_superblocks}

If the exact MIP is too difficult to solve, then
we can use a relaxed ``three-superblock'' formulation whose size does not depend on $k$
to compute a lower bound for \OPT.
The idea is to imagine the bottleneck block,
consolidate all earlier blocks into one superblock, and all later blocks into a third superblock.
Then, we use a combinatorial lower bound $L$ to ensure that the middle block is sufficiently expensive.

\begin{lemma}[Simple lower bound]
\label{lem:simple_lower_bound}
For any computation graph $G=(V,E)$,
cost function $\cost : V \rightarrow \R_{\ge 0}$,
and partition of $V$ into $k \ge 1$ blocks,
there exists a block with at least $L$ units of \cost, where
\begin{equation}
\label{eqn:simple_lower_bound}
    L = \max\parens*{
      \max_{v \in V} \cost(v),
      \frac{1}{k}\sum_{v \in V} \cost(v)
    }
    \le
    \OPT.
\end{equation}
\end{lemma}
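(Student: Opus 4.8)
The plan is to establish the two assertions of the lemma separately: first, the combinatorial fact that every partition of $V$ into $k$ blocks has some block whose total node cost is at least $L$; and second, that this purely combinatorial quantity lower bounds $\OPT$, which is defined via the full block cost $f$ rather than the bare node cost.

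For the first assertion, I would fix an arbitrary partition $P = \{P_1, \dots, P_k\}$ and let $M = \max_{i \in [k]} \sum_{v \in P_i} \cost(v)$ denote the largest node cost of any block. The goal is to show $M \ge L$, and since $L$ is the maximum of two terms, it suffices to bound $M$ below by each term independently. For the term $\max_{v \in V} \cost(v)$: the node $v^*$ attaining this maximum lies in exactly one block $P_j$, and because costs are nonnegative, $\sum_{v \in P_j} \cost(v) \ge \cost(v^*)$, so $M \ge \max_{v \in V} \cost(v)$. For the averaging term: since $P$ is a partition, $\sum_{i \in [k]} \sum_{v \in P_i} \cost(v) = \sum_{v \in V} \cost(v)$, so by the pigeonhole principle the largest block satisfies $M \ge \frac{1}{k} \sum_{v \in V} \cost(v)$. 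Combining the two bounds gives $M \ge L$, so some block carries at least $L$ units of \cost.

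For the second assertion, I would observe that every term contributing to the block cost $f(S)$ in \Cref{eqn:full-block-cost} is nonnegative: the two $\iocost$ terms are nonnegative directly from their definition in \Cref{eqn:iocost_def}, and $\overflowcost(S)$ is nonnegative because of the $(\cdot)^+$ truncation in \Cref{eqn:overflow-cost}. Hence $f(S) \ge \sum_{v \in S} \cost(v)$ for every $S \subseteq V$. Applying the first assertion to the optimal partition $P^*$ yields a block $P_j^*$ with $\sum_{v \in P_j^*} \cost(v) \ge L$, and then $\OPT = \max_{i \in [k]} f(P_i^*) \ge f(P_j^*) \ge \sum_{v \in P_j^*} \cost(v) \ge L$, which is exactly the claimed inequality.

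There is no substantial obstacle here: both assertions reduce to elementary pigeonhole/averaging arguments together with the nonnegativity of the auxiliary cost terms. The only point deserving mild care is that one should not attempt to find a single block that simultaneously beats both terms of $L$ through the same mechanism; it is cleaner to bound the maximum block cost $M$ from below by each term separately and then take the maximum. Everything else is a short chain of inequalities.
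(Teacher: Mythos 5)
Your proof is correct. The paper itself never writes out a proof of \Cref{lem:simple_lower_bound} (despite deferring the section's proofs to \Cref{app:lower_bounds}, only the theorem, the exact-MIP claim, and the corollary receive explicit arguments there), and your two-step argument---pigeonhole/averaging plus the single-node bound for the combinatorial claim, then nonnegativity of the \iocost and \overflowcost terms to conclude $f(S) \ge \sum_{v \in S} \cost(v)$ and hence $L \le \OPT$---is exactly the standard reasoning the authors implicitly rely on.
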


This formulation is the same as the exact MIP in \Cref{fig:mip}, for $k=3$,
except for two small changes:
\begin{enumerate}
    \item Add a constraint that forces the node cost of block~$2$ to be at least the simple lower bound in \Cref{lem:simple_lower_bound}:
    \begin{equation}
    \label{eqn:bottleneck_simple_lower_bound}
        \sum_{v \in V} \cost(v) \cdot x_{v2} \ge L.
    \end{equation}

    \item Remove $\blockcost_1$, $\blockcost_3$, and all constraints involving them from the MIP.
    This simplifies the objective to
    \begin{equation*}
    \label{eqn:bottleneck_objective}
        \text{minimize~~} \blockcost_2,
    \end{equation*}
    as the middle block aims to model the bottleneck cost.
\end{enumerate}
Observe that the true bottleneck block can hide inside of superblocks 1 or 3, and hence would not contribute to the objective.
Therefore, this relaxation can give strictly weaker lower bounds than the exact MIP.

\begin{restatable}{corollary}{CorollaryBottleneckLowerBound}
\label{cor:bottleneck_lower_bound}
For any computation graph $G$ and number of blocks $k \ge 1$,
the three-superblock MIP
uses $O(n)$ variables, $O(m)$ constraints, and $O(m)$ non-zeros, and 
gives a lower bound for the $\MTPP$ objective.
\end{restatable}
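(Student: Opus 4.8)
The statement bundles two claims---the formulation's size and its validity as a relaxation---and I would dispatch them separately. For the sizes, I would simply invoke \Cref{thm:exact-mip} at the constant $k=3$: its $O(nk)$ variables, $O(mk)$ constraints, and $O(mk)$ non-zeros collapse to $O(n)$, $O(m)$, and $O(m)$. The two edits that produce the three-superblock formulation preserve these bounds, since deleting $\blockcost_1$, $\blockcost_3$, and their constraints can only decrease the counts, while the lone added inequality \Cref{eqn:bottleneck_simple_lower_bound} contributes one constraint and $O(n)=O(m)$ non-zeros.

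For the lower bound I would show that the three-superblock optimum is at most $\OPT$ by producing a single feasible solution whose objective $\blockcost_2$ is at most $\OPT$; minimization then finishes the job. The plan is to start from an optimal partition $P^\star=\{P_1^\star,\dots,P_k^\star\}\in\cP_k(G)$, relabel its blocks along a topological order of the acyclic quotient graph (so every quotient edge runs from a smaller to a larger index), and collapse it into three superblocks around a well-chosen middle block $P_j^\star$: superblock $1=\bigcup_{i<j}P_i^\star$, middle block $=P_j^\star$, and superblock $3=\bigcup_{i>j}P_i^\star$. Because the blocks sit in topological order, every edge out of $P_j^\star$ lands in a later block and every edge into $P_j^\star$ originates in an earlier one, so the collapsed quotient on $\{1,2,3\}$ carries edges only among $1\!\to\!2$, $1\!\to\!3$, $2\!\to\!3$ and is acyclic; thus this three-block partition is feasible for the exact MIP at $k=3$ (empty superblocks are permitted), with $\blockcost_2$ equal to $f(P_j^\star)$ up to the omitted nonnegative \overflowcost term.

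The crux---and the step I expect to require the most care---is choosing $j$. The tempting choice, the true $f$-bottleneck block, can fail the added constraint \Cref{eqn:bottleneck_simple_lower_bound}, because a block may be the bottleneck through its IO term while carrying almost no node \cost. I would instead take $P_j^\star$ to be the block of maximum node cost $\sum_{v\in P_j^\star}\cost(v)$. The averaging bound in \Cref{lem:simple_lower_bound} makes this quantity at least $\tfrac1k\sum_{v\in V}\cost(v)$; and since $P_j^\star$ has the largest node cost, it is at least as large as the node cost of the block containing the single costliest op, hence at least $\max_v\cost(v)$ too. Together these give node cost $\ge L$, so \Cref{eqn:bottleneck_simple_lower_bound} is satisfied. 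Finally, since $f(P_j^\star)\le\max_i f(P_i^\star)=\OPT$ and the omitted overflow term is nonnegative, we obtain $\blockcost_2\le f(P_j^\star)\le\OPT$, completing the argument that the three-superblock MIP lower-bounds $\OPT$.
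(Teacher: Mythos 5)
Your proposal is correct and follows essentially the same route as the paper: bound the size by instantiating \Cref{thm:exact-mip} at $k=3$, then exhibit a feasible three-superblock solution built from an optimal partition by selecting a middle block whose node cost is at least $L$ (which is exactly what \Cref{lem:simple_lower_bound} guarantees exists) and collapsing the earlier and later blocks into superblocks. Your explicit observation that the $f$-bottleneck block may violate \Cref{eqn:bottleneck_simple_lower_bound}, so one must pick the block by node cost rather than by $f$, is a subtlety the paper's proof handles implicitly by choosing ``one such block'' with $\ge L$ units of work; otherwise the two arguments coincide.
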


\subsection{``Guess the bottleneck block'' formulation} \label{subsec:guess-bottleneck-block-id}

The three-superblock MIP is
agnostic about which block in the original instance
(represented by block~2 in the relaxation) is the one with $\cost \ge L$.
Building on this, another approach is to guess that the bottleneck is block $j \in [k]$,
and get a stronger lower bound $\text{LB}_j$ under this assumption.
Since the guess could be wrong, we must compute $\text{LB}_j$ for all $j \in [k]$
and take $\min_{j \in [k]} \text{LB}_j$ as the valid lower bound for $\OPT$.

The formulation for $\text{LB}_j$ is the same as the exact MIP with $k=3$ in \Cref{fig:mip}, except we add constraint~\eqref{eqn:bottleneck_simple_lower_bound}
and make one other change.
    For $b \in \{1, 3\}$, the right-hand side of \cref{eqn:block_cost_def} defining $\blockcost_{b}$
    is the combined node cost and cut cost for superblock~$b$,
    excluding the tensors that are cut by blocks in the same superblock and counting 
    tensors that enter superblock~3 only once, even if their edges terminate in different blocks within superblock~$3$.
    The worst block in the superblock is at least as expensive as the average block,
    so we can replace the constraints in \Cref{eqn:bottleneck_lower_bounds} with the following lower bounds:
    \begin{align*}
        \bottleneck &\ge \frac{1}{j - 1} \cdot \blockcost_1 \\
        \bottleneck &\ge \frac{1}{k - j} \cdot \blockcost_3.
    \end{align*}
If $j=1$, this forces $x_{v1} = 0$ for all $v \in V$;
and if $j = k$, this forces $x_{v3} = 0$ for all $v \in V$.
Said differently, nodes cannot be assigned before block $1$ or after block $k$.
If $k=3$, then there is no reason to prefer one relaxation over the other
(\ie \Cref{subsec:relax_to_three_superblocks} and \Cref{subsec:guess-bottleneck-block-id}),
but for $k \gg 3$,
the two relaxations use substantially fewer variables and constraints than the exact formulation in \Cref{subsec:exact-mip}.

\section{Algorithm}
\label{sec:algorithm}

\begin{table*}[!b]
  \vspace{-0.2cm}
  \caption{Geometric means of the best available lower bound from the MIP hierarchy,
  normalized by the best solution found using BKRGA, across the production dataset.}
  \label{tab:prod_scaled_lower_bounds}
  \centering
  \vspace{0.1cm}
  \begin{tabular}{lcccccccccccccccccccccccc}
    \toprule
    Lower bound & $k=2$ & $k=4$ & $k=8$ & $k=16$ & $k=32$ & $k=64$ \\
    \midrule
       \simple (\Cref{lem:simple_lower_bound}) & 0.8340 & 0.6627 & 0.5236 & 0.4598 & 0.4435 & 0.4401 \\
       \bottleneck & 0.9597 & 0.7911 & 0.6481 & 0.5770 & 0.5590 & 0.5543 \\
       \bottleguess & 0.9901 & 0.8446 & 0.6601 & 0.5780 & 0.5593 & 0.5543 \\
       \exact & 0.9901 & 0.9737 & 0.9588 & 0.9452 & 0.8749 & 0.7874 \\
    \bottomrule
  \end{tabular}
\end{table*}

We now present our approach to pipeline partitioning.
This algorithm is simple by design and runs inside ML compilers with tight latency requirements (e.g., XLA for TensorFlow).
In \Cref{sec:experiments},
we prove that it is near-optimal across a production testbed
by computing per-instance approximation guarantees using our new MIP formulations.

\subsection{Reducing to a search over topological orderings}
We first reduce \MTPP to a search over topological orderings as follows:
\begin{enumerate}
    \item An optimal partition $P^*$ in Eq.~\Cref{eqn:optimization_problem} has a
    corresponding topological order $\pi^*$.

    \item \label{item:node-weights} There exist node weights $\mat{x}^* \in [0, 1]^n$ such that
    Kahn's topological sorting algorithm with tiebreaking on $\mat{x}^*$ recovers $\pi^*$ (see \Cref{app:kahn}).

    \item \label{item:DP} For any topological order $\pi \in \mathfrak{S}_{V}$,
    we can efficiently compute an optimal segmentation of $\pi$
    via dynamic programming. By slicing a topological order this way,
    we easily \emph{satisfy the acyclicity constraint}.
\end{enumerate}

One method for searching over topological orders in \Cref{item:node-weights} is to sample random node weights.
Another is to learn the weights using a
genetic algorithm or reinforcement learning.
To implement the dynamic program in \Cref{item:DP} efficiently, 
we use a fast data structure for segment cost queries.

\begin{restatable}{lemma}{LemmaSegmentCostDataStructure}
\label{lem:segment_cost_data_structure}
There is a $\SegmentCostDataStructure$ that takes
computation graph $G=(V,E)$
and topological order $\pi \in \mathfrak{S}_{V}$ as input,
and supports the following operations:
\begin{itemize}
    \item $\Initialize(G, \pi)$: Preprocesses the graph in $O(n^2 + m \log^2 (n))$ time.
    \item $\Query(\ell, r)$: Returns $f(\{v_{\pi(\ell)}, \dots, v_{\pi(r)}\})$ in Eq.~\cref{eqn:full-block-cost}
    in constant time, after initialization.
\end{itemize}
\end{restatable}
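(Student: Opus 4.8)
The plan is to precompute, during \Initialize, a constant number of one- and two-dimensional tables so that each \Query is answered by reading and combining $O(1)$ stored values. Throughout I index nodes by their position in $\pi$ and write $p(u)$ for the position of $u$; since $\pi$ is a topological order, every edge $(u,v)\in E$ satisfies $p(u)<p(v)$. I would decompose $f$ in \cref{eqn:full-block-cost} into its four additive pieces and handle each separately. The node-cost term $\sum_{v\in S}\cost(v)$ and the parameter term $\size(S)$ are prefix sums in $\pi$, so after $O(n)$ preprocessing each is available in $O(1)$; these also feed directly into $\overflowcost$ once $\peakmem(S)$ is known.

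For the two $\iocost$ terms I would exploit that, for a contiguous segment $S=\{v_{\pi(\ell)},\dots,v_{\pi(r)}\}$ and forward edges, the crossing producers are cleanly characterized: a tensor contributes to $\iocost(V\setminus S,S)$ iff its producer lies left of $\ell$ and has a consumer in $[\ell,r]$, and to $\iocost(S,V\setminus S)$ iff its producer lies in $[\ell,r]$ and has a consumer right of $r$. The key observation is that each of these, as a function of $(\ell,r)$, is a sum of axis-aligned rectangle contributions. For the output term, a producer $u$ with last-consumer position $g(u)$ contributes $\outputsize(u)$ exactly on the rectangle $\ell\in[1,p(u)]$, $r\in[p(u),g(u)-1]$, giving $O(n)$ rectangles. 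For the input term, sorting each producer's consumers $c_1<\dots<c_d$ (total $O(m\log n)$) and setting $c_0=p(u)$, the ``count-each-tensor-once'' rule makes $u$ contribute on the disjoint rectangles $\ell\in(c_{i-1},c_i]$, $r\in[c_i,n]$, one per edge, for $O(m)$ rectangles total. I would then lay all contributions onto a single $(n{+}2)\times(n{+}2)$ difference array and take a 2D prefix sum, tabulating both $\iocost$ terms for all $(\ell,r)$ in $O(n^2+m)$ time and $O(n^2)$ space, with $O(1)$ lookups.

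The remaining and genuinely hard piece is $\overflowcost(S)$, which through \cref{eqn:overflow-cost} requires $\peakmem(S)$, the peak working memory when the ops of $S$ are executed in the order induced by $\pi$. Fixing this schedule makes $\peakmem$ well defined, and the live memory just after time $t$ splits into an \emph{internal} part, $\sum_{u:\,\ell\le p(u)\le t,\ g(u)>t}\outputsize(u)$, which is independent of $r$, and an \emph{input} part counting left-of-$\ell$ producers whose next consumer after time $t$ falls in $(t,r]$, which couples all three of $\ell$, $r$, and $t$. I would tabulate $\peakmem(\ell,r)=\max_{\ell\le t\le r}(\text{internal}+\text{input})$ by sweeping and maintaining the internal profile in a segment tree over $t$ supporting range-add and range-max, injecting each input tensor's contribution over the appropriate $t$-interval as $r$ advances; charging these updates carefully yields the $O(m\log^2 n)$ term. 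Once $\peakmem(\ell,r)$ is stored, $\overflowcost$ and hence $f$ are assembled in $O(1)$ per \Query by summing the five stored quantities.

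The main obstacle is precisely this peak-memory tabulation: unlike the other terms, the live-memory profile does not update locally when $r$ grows, because extending the segment can newly load an input tensor that then occupies memory at \emph{earlier} time steps, and can convert a held output tensor into an internally consumed one. Hitting the claimed $O(n^2+m\log^2 n)$ bound therefore hinges on an amortized argument bounding the total number of range-add events across the sweep and on a data structure whose per-event cost is $O(\log^2 n)$; the rest of the proof is bookkeeping to verify that the rectangle decompositions count each tensor exactly once and that the assembled table equals $f$.
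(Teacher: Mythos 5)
Your handling of the \cost, \size, and \iocost terms is correct but follows a genuinely different route from the paper. The paper works with the \emph{complement}: it initializes every entry of \texttt{io\_struct} to $\sum_{v}\outputsize(v)$ and, for each producer $u$, subtracts $\outputsize(u)$ from the $\deg^+(u)+2$ square/rectangular regions of $(\ell,r)$-space on which $u$'s tensor is \emph{not} cut (segments lying strictly inside a gap between consecutive elements of $\{u\}\cup N^+(u)$, plus segments containing all of them), applying these as range updates to a two-dimensional Fenwick tree at $O(\log^2 n)$ each and then flushing the tree into a plain array. You instead decompose the \emph{cut} region directly into $O(m)$ disjoint rectangles (one per edge for the input term, one per producer for the output term) and accumulate them with a 2D difference array plus a prefix sum. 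Your decomposition is correct --- the interval $(c_{i-1},c_i]$ argument does count each tensor exactly once --- and it is arguably cleaner and faster, costing $O(n^2+m)$ for this piece versus the paper's $O(n^2+m\log^2 n)$; the Fenwick tree buys nothing here since all updates are known offline before any query.

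The divergence, and the one real gap, is in the \overflowcost term. The paper does \emph{not} tabulate $\peakmem(\ell,r)$: its \Query pseudocode invokes $\peakmem(\cdot)$ as an oracle, its proofs account only for \cost, \size, and \iocost, and the surrounding text makes clear that in practice a fixed activation buffer is reserved so that \overflowcost reduces to a function of the prefix-summable quantity $\size(S)$. You have taken on the strictly harder task of computing the exact peak working memory of every contiguous segment under the induced schedule, and your sketch does not close it: across the $n$ sweeps over $\ell$, each edge entering a segment from the left can trigger a fresh range update for many distinct values of $\ell$, so the total number of range-add events is not obviously better than $O(nm)$, and the ``amortized argument'' you defer to is exactly the missing step --- it is not clear it exists within the claimed $O(n^2+m\log^2 n)$ budget. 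To match the lemma as the paper actually proves and uses it, you should either treat $\peakmem$ as a constant-time oracle (or a reserved constant), in which case your proof is complete, or explicitly flag the peak tabulation as an additional result requiring its own argument.
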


All proofs for this section are deferred to \Cref{app:algorithm}, but
at a high level, \SegmentCostDataStructure computes the cost of each $[\ell,r]$
slice of the topological order $\pi$ (counting each tensor in a cut once)
using a sliding window algorithm and
two-dimensional Fenwick tree~\citep{mishra2013new}.

\begin{algorithm}[t]
\caption{Optimal \MTPP slicing of topological order $\pi$ into at most $k$ blocks.}
\label{alg:dynamic_programming}
\begin{algorithmic}[1]
\Function{\SliceGraph}{$G$, $k$, $\pi$}\\ \MyComment{Partitions the full topological order $\pi$}
    \State Initialize \texttt{segment\_cost} data structure for $(G, \pi$)
    \State \textbf{return} $\Solve(\texttt{segment\_cost}, n, k)$
\EndFunction

\Function{\Solve}{\texttt{segment\_cost}, $\nhat$, $\khat$}\\ \MyComment{Recursively partitions the first $\nhat$ nodes optimally into $\khat$ blocks}
    \If{$\khat=1$}
        \State \textbf{return} $\texttt{segment\_cost}.\texttt{Query}(1, \nhat)$
    \EndIf
    \vspace{-0.15cm}
    \State $ans \gets \infty$
    \For{$\ell=1$ to $\nhat$}
        \State $a \gets \Solve(\texttt{segment\_cost}, \ell, \khat-1)$
        \State $b \gets \texttt{segment\_cost}.\texttt{Query}(\ell + 1, \nhat)$
        \State $ans \gets \min(ans, \max(a, b))$ \label{line:recurrence}
    \EndFor
    \State \textbf{return} $ans$
\EndFunction
\end{algorithmic}
\end{algorithm}

\begin{restatable}{lemma}{LemmaSliceGraph}
\label{lem:slice_graph}
\SliceGraph runs in time $O(n^2 k + m\log^2 n)$ and finds an optimal
segmentation of topological order $\pi$
into at most $k$ blocks for \MTPP.
\end{restatable}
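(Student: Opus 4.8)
The plan is to establish \textbf{correctness} and the \textbf{running-time bound} separately, both of which hinge on one structural observation: the block cost $f(S)$ in \Cref{eqn:full-block-cost} depends only on the set $S$ and the fixed graph $G$, not on how the remaining nodes $V\setminus S$ are grouped. Hence for any partition the \MTPP objective $\max_i f(P_i)$ is a maximum of per-block costs that are \emph{mutually independent}, which is exactly what licenses a min--max dynamic program. Since the blocks are contiguous intervals of $\pi$, I would define $D(\nhat,\khat)$ to be the minimum achievable bottleneck when the prefix $\{v_{\pi(1)},\dots,v_{\pi(\nhat)}\}$ is cut into at most $\khat$ contiguous blocks, writing $f([\ell,\nhat])$ as shorthand for the cost of such an interval. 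I would record the trivial monotonicity that $D$ is non-increasing in $\khat$ (padding with empty blocks, each of cost $f(\emptyset)=0$ since all sums in \Cref{eqn:full-block-cost} are empty and $\overflowcost(\emptyset)=0$ when $\blockmem\ge 0$, never hurts).

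For correctness I would prove $\Solve(\cdot,\nhat,\khat)=D(\nhat,\khat)$ by induction on $\khat$, reading \Solve as memoized on the pair $(\nhat,\khat)$. The base case $\khat=1$ is immediate: the only option is the block $[1,\nhat]$, and the algorithm returns $\Query(1,\nhat)=f([1,\nhat])$. For the step I would bound $\min_{1\le \ell\le \nhat}\max\!\big(D(\ell,\khat-1),\,f([\ell+1,\nhat])\big)$ from both sides: each $\ell$ exhibits a feasible $\le\khat$-block segmentation (optimally slice $[1,\ell]$ into $\khat-1$ blocks, then append $[\ell+1,\nhat]$), giving $D(\nhat,\khat)\le\Solve(\nhat,\khat)$; conversely, from an optimal segmentation with nonempty blocks $B_1,\dots,B_t$ I would set $\ell^\star$ to the left endpoint of $B_t$ minus one and recover that exact objective value at $\ell=\ell^\star$, giving $\Solve(\nhat,\khat)\le D(\nhat,\khat)$.

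The step I expect to be the main obstacle is the boundary bookkeeping in this converse direction: the loop ranges over $\ell\in\{1,\dots,\nhat\}$ and so never sets the prefix index to $0$, which in effect forces the first block to be nonempty. I would resolve this using the independence of block costs together with $f(\emptyset)=0$: any $\le\khat$-block segmentation can be normalized so its empty blocks are pushed to the end without changing the bottleneck, whence the first nonempty block contains $v_{\pi(1)}$ and the normalized solution is reached by the recursion. Splitting into $t\ge 2$ (where $\ell^\star\ge 1$ is a legal loop index) and $t=1$ (handled by $\ell=\nhat$, where the appended block is empty and the monotonicity $D(\nhat,\khat-1)\le D(\nhat,1)$ closes the gap) then completes the induction. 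The top-level call $\Solve(\cdot,n,k)$ therefore returns $D(n,k)$, the optimal contiguous $\le k$-segmentation of $\pi$; validity (acyclicity of the quotient graph) is inherited from $\pi$ being a topological order, as noted in the reduction.

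For the running time I would invoke \Cref{lem:segment_cost_data_structure}, so that $\Initialize$ costs $O(n^2+m\log^2 n)$ and every $\Query$ is $O(1)$ afterward. With memoization there are $O(nk)$ distinct states $(\nhat,\khat)$, and each performs a single loop of length at most $n$ whose iterations each do one $O(1)$ table lookup, one $O(1)$ query, and one $\min/\max$, for $O(n)$ work per state and $O(n^2 k)$ in total. Adding initialization yields $O(n^2 k + n^2 + m\log^2 n) = O(n^2 k + m\log^2 n)$, as claimed. I would remark that memoization is essential (the bare recursion is exponential) and is the intended reading of \Cref{alg:dynamic_programming}.
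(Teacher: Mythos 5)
Your proposal is correct and follows essentially the same route as the paper: initialize the segment-cost data structure via \Cref{lem:segment_cost_data_structure}, then run the memoized stars-and-bars dynamic program with $O(nk)$ states and $O(n)$ work per state, for $O(n^2k + m\log^2 n)$ total. The paper's own proof is only a brief sketch of this, so your careful induction---in particular normalizing empty blocks to the end to handle the fact that the loop never allows an empty prefix---is a legitimate filling-in of details the paper leaves implicit rather than a different approach.
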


\subsection{Searching over topological orders}
\label{sec:searching}

Any topological order $\pi \in \mathfrak{S}_V$ can be realized using 
Kahn's algorithm with the right node priorities $\mat{x} \in [0, 1]^n$.
Thus, we now focus on methods for finding good vectors of node weights.
Note that we only recast \MTPP into a search
over topological orders to bypass the acyclic quotient graph constraint---it is still \emph{provably hard} to find an optimal topological order.
Our next result formalizes this observation.

\begin{restatable}{theorem}{Main}
\label{thm:main}
There exist node priorities $\mat{x}^* \in [0, 1]^n$ such that
Kahn's algorithm with tie-breaking outputs a topological order
$\pi^*$ for which $\SliceGraph(G, k, \pi^*) = \OPT$.
\end{restatable}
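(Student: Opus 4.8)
The plan is to exhibit an optimal partition, linearize it into a single topological order whose blocks are contiguous, realize that order with Kahn's algorithm, and then appeal to the optimality of \SliceGraph on contiguous segmentations. The crux is a "segmentations $\subseteq \cP_k(G)$" correspondence that lets both the upper and lower bound fall out of \Cref{lem:slice_graph}.

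First I would fix an optimal partition $P^* = \{P^*_1, \dots, P^*_k\} \in \cP_k(G)$ achieving $\max_{i} f(P^*_i) = \OPT$. Because $P^* \in \cP_k(G)$, its quotient graph is acyclic, so the blocks admit a topological order $P^*_{\sigma(1)}, \dots, P^*_{\sigma(k)}$ in which every inter-block edge points forward. Since $G$ is itself a DAG, each block's induced subgraph is acyclic and can be ordered internally; concatenating these internal orders in the block order $\sigma$ produces a permutation $\pi^*$ of $V$. The key verification is that $\pi^*$ is a valid topological order of $G$: an edge within a block is respected by that block's internal order, while an edge between blocks points forward in the quotient graph and hence forward in $\pi^*$. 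By construction each block $P^*_i$ occupies a contiguous interval of $\pi^*$.

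Next I would realize $\pi^*$ via Kahn's algorithm, which is exactly the content of item~\ref{item:node-weights} (proved in \Cref{app:kahn}): setting each node's tie-breaking priority to a normalization of its rank in $\pi^*$ and always extracting the ready node of smallest priority reproduces $\pi^*$, by a short induction showing that the next node of $\pi^*$ is always ready and has the smallest rank among all ready-but-unplaced nodes. I would simply invoke this lemma to obtain $\mat{x}^* \in [0,1]^n$ whose Kahn output is $\pi^*$. Finally I would prove $\SliceGraph(G,k,\pi^*) = \OPT$ using that every segmentation of a topological order into contiguous intervals is feasible in $\cP_k(G)$ (ordering the intervals left to right makes all inter-block edges point forward, so the quotient graph is acyclic). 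Thus $\SliceGraph(G,k,\pi^*) = \min_{\text{segmentations of }\pi^*} \max_i f(\cdot) \ge \min_{P \in \cP_k(G)} \max_i f(P_i) = \OPT$ by \Cref{lem:slice_graph}, while the particular segmentation recovering $P^*$ gives $\SliceGraph(G,k,\pi^*) \le \max_i f(P^*_i) = \OPT$; combining yields equality.

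The main obstacle I anticipate is the linearization step of the second paragraph: converting the optimal partition into one topological order with contiguous blocks while preserving feasibility. This is precisely where the acyclic-quotient-graph constraint is essential, and it must be checked separately for intra-block and inter-block edges. Everything else is either routine bookkeeping or a direct appeal to \Cref{lem:slice_graph} and \Cref{app:kahn}.
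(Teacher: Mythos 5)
Your proposal is correct and follows essentially the same route as the paper: take an optimal partition, topologically order its blocks, linearize it into a topological order in which each block is a contiguous interval, realize that order via Kahn's algorithm with rank-based (equivalently, blockwise) priorities, and conclude $\SliceGraph(G,k,\pi^*)=\OPT$ by combining the feasibility of every contiguous segmentation with the fact that the segmentation recovering $P^*$ attains $\OPT$. Your write-up is in fact slightly more explicit than the paper's (which assigns one priority per block and compresses the two-sided bound into a single sentence), but there is no substantive difference in the argument.
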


Since it is NP-hard to find an optimal topological order $\pi^*$,
we explore heuristics that are fast and work well in practice.

\emph{Random weights.}
    Each weight $x_i \sim U(0, 1)$ is drawn i.i.d.\ from the uniform distribution.
    Note that this is not the same as sampling topological orders uniformly at random,
    which has its own rich history~\citep{matthews1991generating,bubley1999faster,huber2006fast,garcia2019bottom}.

\emph{Biased random-key genetic algorithm (BRKGA).}
    BRKGA is a problem-agnostic metaheuristic that evolves real-valued vectors $\mat{x} \in [0, 1]^N$ (called \emph{chromosomes})
    using a \emph{decoder} function
    that links BRKGA's evolutionary rules
    to the problem at hand~\citep{gonccalves2011biased}.
    We use BRKGA to optimize the node priorities $\mat{x}$ by evaluating the quality of its induced optimal segmentation $P(\mat{x})$.
    In the language of genetic algorithms,
    the \emph{fitness} of $\mat{x}$ is the $\MTPP$ objective
    $\max_{i \in [k]} f_{P(\mat{x})}(i)$.
    We present this decoder in \Cref{alg:decoder}.

\begin{algorithm}
\caption{BRKGA decoder using Kahn's algorithm and \SliceGraph to partition $G$.}
\label{alg:decoder}
\begin{algorithmic}[1]
    \Function{\texttt{BrkgaSortAndSliceDecoder}}{$G$, $k$, chromosome $\mat{x} \in [0, 1]^n$ of node priorities}
    \State $\pi \gets \Kahn(G, \mat{x})$
    \State \textbf{return} $\SliceGraph(G, k, \pi)$
    \EndFunction
\end{algorithmic}
\end{algorithm}

\section{Experiments}
\label{sec:experiments}

\begin{figure*}
\centering
\includegraphics[width=1.0\textwidth]{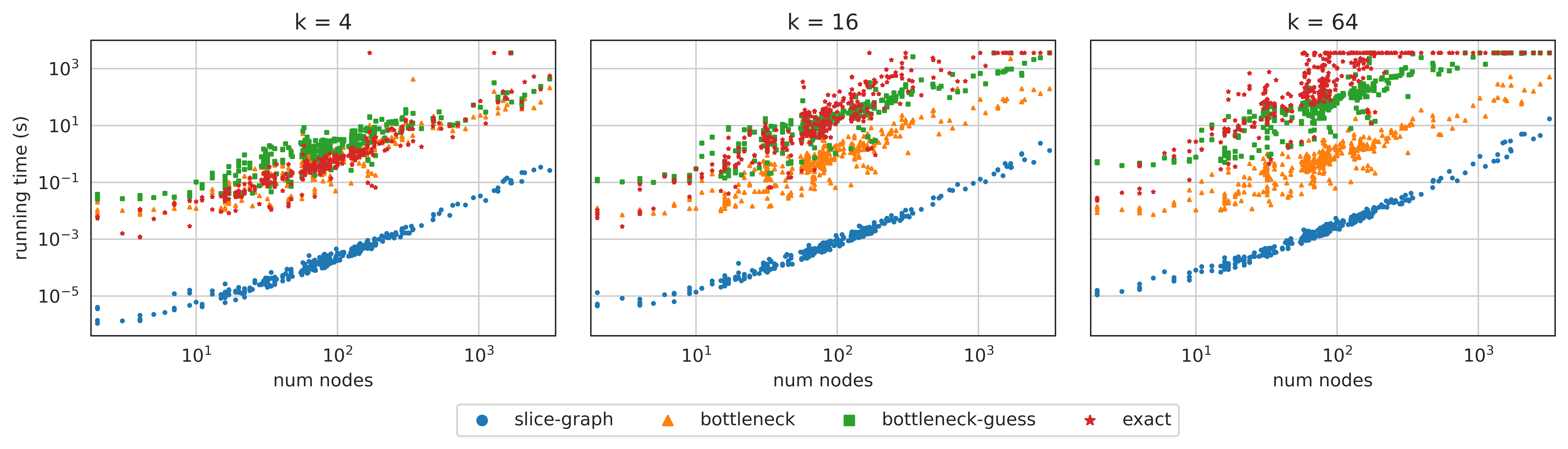}
\vspace{-0.75cm}
\caption{Running times of $\SliceGraph$ and different MIP lower bound computations across the production models. Each point denotes a run for one graph, color-coded to denote $\SliceGraph$ partitioning vs.\ \bottleneck, \bottleguess, and \exact lower bounds. The \bottleguess times are summed across all $k$ MIP instances involved. Each plot is for a different value of $k$.
In order to facilitate visual comparisons across the plots, all three employ the same $y$-axis.
Some of the data tops out at 3600 seconds
since that is where we set the MIP time limit.
}
\vspace{-0.3cm}
\label{fig:production_running_times}
\end{figure*}

We now present an empirical study of our MIP lower bound formulations.
To motivate this, we revisit the practitioner's dilemma (\Cref{sec:dilemma})
with a case study
for a specific model in our production dataset called
\texttt{net10\_new\_ckpt-4000009} for $k=4$ blocks.
Suppose you want to optimize this model to run 
efficiently in production on millions of dollars worth of hardware.
You use \texttt{BrkgaSortAndSliceDecoder} to partition the model,
and the best solution you find has a bottleneck time of 3454.79.
Is this good or bad? How hard should you work to improve it?

Since the units are arbitrary, we rescale the solution to have value 1. Applying 
\Cref{lem:simple_lower_bound}, you prove the \simple combinatorial lower bound of 0.2775 
for this instance (in the rescaled units). This is not enough to satisfy your boss
since you could hypothetically reduce your costs by 72\%. Next you turn to the 
\bottleneck lower bound in \Cref{subsec:relax_to_three_superblocks}, which computes to 0.6570 
(\ie 2.37x stronger than \simple). This makes your boss less cranky, but there is still 
an uncomfortable gap, so you compute \bottleguess 
in \Cref{subsec:guess-bottleneck-block-id}, yielding another 1.39x increase in your 
lower bound to 0.9152. This is enough to satisfy your boss, but for your own curiosity 
you run Gurobi on the \exact MIP, and arrive at a lower bound of 0.9913. As it turns out, 
your original solution was nearly optimal the whole time,
and you just didn't know it!
However, this increasing sequence of certificates---obtained with increasing computational effort---steadily increased your confidence.

Our experiments repeat this exercise for hundreds of computation graphs across our production testbed, for all $k \in \{2, 4, 8, 16, 32, 64\}$.
We rescale each lower bound to represent it as a fraction of the best solution value found,
and then we summarize the quality of the lower bounds in
\Cref{tab:prod_scaled_lower_bounds}
by taking their geometric mean across all graphs.

\vspace{-0.15cm}
\paragraph{Datasets}
Our production testbed is a superset of the models
in the experiments of \citet{xie2022transferable}.
This includes 369 computation graphs from many application domains.
(\eg BERT, ResNet, MobileNet, vision models, LSTMs, speech encoders, and WaveRNN).
Most of these graphs are publicly available,
but we cannot publish the node and edge weights as they come from an internal proprietary cost model.

We also run the same experiment on 1000 publicly available
synthetic model graphs from REGAL~\citep{paliwal2019reinforced}.
See \Cref{sec:regal} for the results and more details.

\vspace{-0.15cm}
\paragraph{Setup}
We solve the MIPs using a combination of Gurobi v9.0.2 \cite{gurobi} and SCIP v7.0.1 \cite{BestuzhevaEtal2021OO}.
Each instance is run on a heterogeneous cluster
containing, \eg Intel Xeon Platinum 8173M @ 2.00GHz processors,
and the best lower bound proven in a fixed time limit is reported.

\vspace{-0.15cm}
\paragraph{Partitioning algorithms}
We compare several topological sort heuristics for \SliceGraph:
random weights, BRKGA, and minimum linear arrangement.
Overall, BRKGA worked best,
so we use its solutions to normalize the lower bounds in \Cref{tab:prod_scaled_lower_bounds}.
We provide more details in \Cref{app:primal-heuristics}.

\vspace{-0.15cm}
\paragraph{Results}
We discuss several interesting properties
about the results in \Cref{tab:prod_scaled_lower_bounds}.
First,  
\bottleguess and \exact produce the same lower bounds for $k=2$.
This is not a coincidence since the MIPs are equivalent for $k=2$. 
For larger values of $k$, \bottleguess MIPs can ``cheat'' in two
ways:
(1) by ignoring communication costs within the superblocks,
and (2) effectively counting the total work of each superblock as if it is smeared uniformly across its blocks. 
As the value of $k$ increases, so does the opportunity to cheat
since a larger fraction of the communication cost is ignored and the smearing effect is more pronounced.
Hence, \bottleguess loses ground relative to \exact.
The advantage of \bottleguess over 
\bottleneck is that it considers edges crossing the superblock boundary. However, larger 
values of $k$ dilute this advantage since the communication costs get amortized over 
a larger number of blocks implicitly contained in each superblock. 
By $k=64$, the lower bounds produced by \bottleguess and \bottleneck are the same.

Interestingly, the additive gaps between \bottleneck and \simple in \Cref{tab:prod_scaled_lower_bounds} hold for each $k$, hovering near 0.12.
This makes sense because the advantage of \bottleneck is that it considers communication costs while \simple does not,
and its treatment of these costs does not depend on $k$.

The running times show that \bottleguess and \exact are about four orders of magnitude slower than \SliceGraph
in \Cref{fig:production_running_times}.
Therefore, this MIP-based 
analysis is most valuable for offline analysis, rather than running as part of the compiler. 
For $k \leq 16$, \bottleguess running times are roughly on par with \exact. 
We note that \bottleguess in \Cref{fig:production_running_times}
depicts the sum of the running times for all $k$ sub-MIP solves when guessing the $k$ bottlenecks to compute the lower bound.
Since these solves are independent,
they could be run in parallel to achieve a faster wall-clock time.
Further, MIP solve times tend to be strongly superlinear in the problem size (observe the log-log scale),
so we would expect \bottleguess to show an advantage even with total running time for larger $k$.
Indeed, this behavior clearly emerges by $k=64$.

\section{Related work}
\label{sec:related_work}

\paragraph{Model parallelism}
Two of the seminal works on pipeline parallelism for machine learning are
GPipe~\citep{huang2019gpipe} and
PipeDream~\citep{narayanan2019pipedream, narayanan2021memory}.
These works focus on scaling up \emph{DNN training}
and have spawned a long list of related work:
\texttt{torchgpipe}~\citep{kim2020torchgpipe},
FPDeep~\citep{wang2020fpdeep},
Pipemare~\citep{yang2021pipemare},
TeraPipe~\citep{li2021terapipe},
BaPipe~\citep{zhao2022bapipe},
SAPipe~\citep{chen2022sapipe},
BPipe~\citep{kim2023bpipe},
synchronous pipeline planning~\citep{luo2022efficient},
scheduling in heterogenous settings~\citep{park2020hetpipe,yuan2022decentralized},
breadth-first pipeline parallelism~\citep{lamy2023breadth},
and SWARM parallelism~\citep{ryabinin2023swarm}.

Of these works,
PipeDream considers the most similar mathematical model \citep[Section 3.1]{narayanan2019pipedream}.
There is some overlap with \MTPP (\eg a min-max running time objective),
but there are also major differences:
\begin{enumerate}
    \item PipeDream assumes a single topological order
    and that the induced quotient graph is a \emph{path graph}.
    This means IO can only flow between adjacent blocks instead of to shared memory for downstream consumers.
    \item PipeDream uses communication-work concurrency,
        so the running time of a processor executing ops $S \subseteq V$ is
        $\max(\cost(S), \iocost(V\setminus S, S) + \iocost(S, V\setminus S))$.
    \item PipeDream supports data parallelism and is designed to use replicated workers for training.
        The model weight updates for replicated workers use \emph{weight stashing} and a syncing technique
        that is not necessary for inference.
\end{enumerate}

Another technique for going beyond data parallelism is \emph{tensor sharding},
\eg
Mesh TensorFlow~\citep{shazeer2018mesh},
Megatron-LM~\citep{shoeybi2019megatron},
GShard~\citep{lepikhin2021gshard},
and GSPMD~\citep{xu2021gspmd}.

\paragraph{Acyclic graph partitioning}
The origins of acyclic graph partitioning are in multiprocessor scheduling~\citep{garey1979computers}.
There have since been several key applications for pipeline parallelism~\citep{cong1994acyclic, gordon2006exploiting,sanchez2011dynamic}.
The computational hardness and inapproximability of balanced acyclic partitioning has recently been revisited in~\citet{moreira2017graph,papp2023partitioning}.
Some practical methods for acyclic graph partitioning use graph coarsening~\citep{moreira2018evolutionary,herrmann2019multilevel,popp2021multilevel}
and MIP with branch-and-bound solvers~\citep{nossack2014branch,albareda2019reformulated,ozkaya2022simple}.

\section*{Conclusion}
\label{sec:conclusion}

This work formalizes $\MTPP$ for pipelined DNN inference,
proposes novel mixed-integer programs for computing lower bounds for this partitioning objective,
and presents fast and effective partitioning algorithms for maximizing inference throughput.
Our lower bounds allow us to prove strong a posteriori approximation guarantees,
which can be invaluable in practice since countless software engineering hours
are spent partitioning mission-critical ML models across accelerators to improve system efficiency.
Without good lower bounds, it is often unclear
if practitioners should continue searching for better partitions,
or if they are near optimality and just don't know it.
Our MIP formulations allow us to compute certificates that
act as a stopping condition on further investment of software engineering time.

\section*{Impact statement}
We present work that advances the field of ML efficiency.
There are many potential societal consequences of our work, none which we feel must be specifically highlighted here.

\section*{Acknowledgements}
We thank Dong Hyuk Woo for encouraging us to research pipeline partitioning algorithms.
Part of this work was done while Kuikui Liu was an intern at Google Research.

\bibliography{references}
\bibliographystyle{icml2024}

\newpage
\appendix
\onecolumn

\section{Missing analysis for \Cref{sec:lower_bounds}}
\label{app:lower_bounds}

\TheoremHardness*

\begin{proof}
The minimum makespan scheduling problem on $k$ identical parallel processors is as follows.
We are given processing times for $n$ jobs $(p_1, p_2, \dots, p_n)$
and asked to find an assignment of jobs to processors
so that the completion time (i.e., \emph{makespan}) is minimized.
We reduce to \MTPP by constructing a graph $G$ with $n$ vertices and no edges,
setting $\cost(v_i) = p_i$,
and computing a max-throughput partition of $G$ to solve the original makespan instance.

For $k=2$, this is the $\NP$-hard \emph{partition problem}.
More generally, minimum makespan scheduling is strongly \NP-hard,
so there cannot exist an FPTAS, unless $\P = \NP$~\citep{hochbaum1987using}.
\end{proof}

\TheoremExactMIP*

\begin{proof}
The $x$, $y$, and $c$ variables are each indexed over all nodes and blocks, so there are $O(nk)$ 
variables total. Constraints \crefrange{eqn:dag_constraints}{eqn:output-tensor-cut} are each indexed 
over all edges and blocks, so there are $O(mk)$ of those. Each constraint except for
\cref{eqn:block_cost_def} has a constant number of non-zeros, so they contribute $O(mk)$ 
non-zeros. Each of the $k$ constraints of type \cref{eqn:block_cost_def} has $n$ each of the~$x$ 
and $c$ variables, so $nk$ non-zeros overall. Thus, there are $O(mk)$ non-zeros in total.

To prove this MIP correctly models \MTPP, we must prove 
that (a) every solution to the problem corresponds to a solution of the MIP (with the same 
objective value), and (b) every solution to the MIP can be transformed into a solution with the same 
or better cost that corresponds to a solution of the problem (with the same objective value).

To prove (a), start with any $\MTPP$ solution. Each node $v$ is assigned to 
exactly one block $b$, so set $x_{vb} = 1$ and $x_{v \bhat} = 0$, for all $\bhat \neq b$. Set $y$ to 
match, i.e., $y_{v0} = \dots = y_{v(b-1)} = 0$ and $y_{vb} = \dots = y_{vk} = 1$.
For each edge 
$(u,v)$ and block~$b$, set $c_{ub} = 1$ if the edge crosses into or out of the block, and 0 otherwise. 
Finally, set $\blockcost_b$ to satisfy \Cref{eqn:block_cost_def} and set $\bottleneck$ to be the 
maximum of the block costs.

We must now check that all of the constraints are satisfied.
Constraints \Cref{eqn:bottleneck_lower_bounds} 
and \Cref{eqn:block_cost_def} are satisfied by construction. Since the original solution 
satisfies the 
DAG constraints, each edge goes from some block to the same or a later block, so constraint~\Cref{eqn:dag_constraints} is satisfied. Constraint \Cref{eqn:input-tensor-cut} cannot be violated 
unless $y_{u(b-1)} = x_{ub} = 1,$ because otherwise the RHS is zero or negative. But in this case, edge 
$(u,v)$ originates before block $b$ and terminates inside block $b$, so the edge is cut as an input 
tensor, so we set $c_{ub} = 1,$ satisfying constraint \Cref{eqn:input-tensor-cut}. Similarly, the only 
way constraint \Cref{eqn:output-tensor-cut} can be violated is if $x_{ub} = 1$ and $y_{vb} = 
0$. In this case, edge $(u,v)$ originates in block~$b$ and terminates after block $b$, so the edge is 
cut as an output tensor, so we have set $c_{ub} = 1,$ satisfying constraint 
\Cref{eqn:output-tensor-cut}. Constraint \Cref{eqn:xy-sub} and the~$y$ monotone ordering constraints are 
satisfied by construction. Finally, $\bottleneck$ really does capture the objective value since it 
equals the cost of the most expensive block, and the block costs are defined in \Cref{eqn:block_cost_def}. 
Therefore, every solution to $\MTPP$ corresponds to a solution of the MIP with the same cost.

Now we prove property (b). 
First, note that constraints \Cref{eqn:input-tensor-cut} and \Cref{eqn:output-tensor-cut} each 
place one lower bound on $c_{ub}$ for each edge $(u,v) \in E$.
Since the only other place $c_{ub}$ appears is in the objective function (implicitly via constraints \Cref{eqn:block_cost_def} and 
\Cref{eqn:bottleneck_lower_bounds}), setting $c_{ub}$ to the maximum of those lower bounds can only 
improve the objective function without harming feasibility. Similarly, $\bottleneck$ should be set to 
the maximum of the lower bounds in \Cref{eqn:bottleneck_lower_bounds}. For a fixed $v \in V$, 
the $y_{vb}$ variables start at 0 when $b=0$ and end at 1 when $b=k$, and by \Cref{eqn:xy-sub} 
we have $x_{vb} = 1$ for the 
value of $b$ when $y_{vb}$ first jumps up to~1.
Thus, the set $\{v \in V : x_{vb} = 1\}$ 
defines the $b$-th block of the partition, and these blocks disjointly cover all 
nodes $v \in V$. Moreover, by a similar argument as above, if any of the edges $(u,v) \in E$ forces 
$c_{ub} = 1$ via constraints \Cref{eqn:input-tensor-cut} or \Cref{eqn:output-tensor-cut} then edge 
$(u,v)$ really is cut by block $b$ in this partition, and otherwise none of the edges out of $u$ is 
cut by block $b$.
Thus, \Cref{eqn:block_cost_def} captures the cost of each 
block in this solution, and $\bottleneck$ captures the cost of the bottleneck block.
\end{proof}

\CorollaryBottleneckLowerBound*

\begin{proof}
The three-superblock MIP is essentially the same as the formulation in 
\Cref{fig:mip} for $k=3$, which means $k$ gets absorbed in the big-$O$ notation and 
the sizes become $O(n)$ variables, $O(m)$ constraints, and $O(m)$ non-zeros.

To prove this MIP gives a valid lower bound,
we start with any solution $P^*$ to $\MTPP$ and generate a MIP solution whose value is the same or lower.
\Cref{lem:simple_lower_bound} shows that some block must be assigned at least $L$ units of work, so find one such block $b$ in $P^*$.
For each node $v$, set $x_{v2} = 1$ if $v$ is in block $b$, $x_{v1}=1$ if $v$ is in an earlier block, and $x_{v3}=1$ if $v$ is in a later block.
Set all other $x$ variables to 0, the $y$ variables as implied by constraints \Cref{eqn:xy-sub},
and the $c$ variables to the minimum value that satisfies constraints \Cref{eqn:input-tensor-cut} and \Cref{eqn:output-tensor-cut}.
We have satisfied the work constraint by construction, and by the same reasoning as in the proof of \Cref{thm:exact-mip}, the value of the MIP solution we constructed equals the cost of block~$b$, which is at most the cost of the bottleneck block in the partition.
Since this construction works for all solutions to $\MTPP$,
we have proven that the optimal solution of the three-superblock MIP is a lower bound for the true optimal solution.
\end{proof}

\section{Missing algorithms and analysis for \Cref{sec:algorithm}}
\label{app:algorithm}

\subsection{Kahn's algorithm with node priorities}
\label{app:kahn}

Kahn's algorithm is a topological sort algorithm that repeatedly peels off the leaves of a DAG~\citep{kahn1962topological}.
It is particularly useful because it can output different orderings---if there are multiple leaves, different tie-breaking rules produce different topological orders.
We give pseudocode for \Kahn in \Cref{alg:kahn},
which takes a vector $\mat{x} \in [0, 1]^n$ of node priorities as input,
and runs in time $O(n\log n + m)$ if implemented with a heap-based priority queue for the active set of leaves.

\begin{algorithm}[H]
\caption{Kahn's topological sorting algorithm with tie-breaking by node priorities.}
\label{alg:kahn}
\begin{algorithmic}[1]
\Function{\texttt{KahnWithNodePriorities}}{graph $G = (V, E)$, node priorities $\mat{x} \in [0, 1]^n$}
\State Initialize $\pi \gets \mat{0}_{n}$, $\texttt{indegree} \gets \mat{0}_n$, and $i \gets 1$
\For{each $v \in V$}
    \State $\texttt{indegree}[v] \gets |N^{-}(v)|$
\EndFor
\State Initialize priority queue $q$ \phantom{+++} \MyComment{Max heap implementation}
\For{each leaf node $v \in V$}
    \State Insert priority-node pair $(x_v, v)$ into $q$
\EndFor
\While{$q$ is not empty}
    \State $u \gets \text{top}(q)$; $\text{pop}(q)$
    \State $\pi[i] \gets u$
    \For{each $v \in N^+(u)$}
        \State $\texttt{indegree}[v] \gets \texttt{indegree}[v] - 1$
        \If{$\texttt{indegree}[v] = 0$}
            \State Insert priority-node pair $(x_v, v)$ into $q$
        \EndIf
    \EndFor
    \State $i \gets i + 1$
\EndWhile
\State \textbf{return} $\pi$
\EndFunction
\end{algorithmic}
\end{algorithm}

\subsection{Segment cost data structure}
\label{sec:segment_cost_data_structure}

We start with a simpler version of the segment cost data structure that makes entrywise updates to the
$\texttt{io\_struct[][]}$ array during initialization.
We give a proof of its correctness,
and then we explain how to speed up this preprocessing step with a two-dimensional Fenwick tree
to achieve faster $O(\log^2(n))$ subrectangle updates~\citep{mishra2013new}.

\begin{warmup}
\label{warmup:warmup}
There is a $\SegmentCostDataStructure$ that takes a
computation graph $G=(V,E)$
and topological order $\pi \in \mathfrak{S}_{V}$ as input,
and supports the following operations:
\begin{itemize}
    \item $\Initialize(G, \pi)$: Preprocesses the graph in $O(n^3)$ time.
    \item $\Query(\ell, r)$: Returns $f(\{v_{\pi(\ell)}, \dots, v_{\pi(r)}\})$ in Eq.~\cref{eqn:full-block-cost}
    in constant time, after initialization.
\end{itemize}
\end{warmup}

\begin{proof}
The correctness is clear by inspection since we are memoizing the contributions of $\cost$, $\iocost$, and $\size$
to the overall cost of each segment.
Calls to $\Query(\ell,r)$ take constant time since we are only performing $O(1)$ array look-ups and arithmetic operations.
It remains to show that the memoization data structures can be built in $O(mn^{2})$ time.

The prefix-sum data structures \texttt{work\_struct} and \texttt{mem\_struct} can all be built in $O(n)$ time since we take one pass over each vertex of the computation graph, and updating each entry requires $O(1)$ time (assuming $O(1)$-time queries to $\cost$ and $\size$).
Furthermore, \texttt{io\_struct} can be constructed in $O(n^{3})$ time.
To see this, observe that we take a single pass over all vertices $u \in V$,
and for each vertex, we perform at most $O(n^2)$ arithmetic operations
and calls to $\outputsize$ since each update is for a distinct $[\ell, r]$ interval.
\end{proof}

\begin{algorithm}[H]
\caption{Segment cost data structure for blocks of the form $P = \{v_{\pi(\ell)}, v_{\pi(\ell + 1)}, \dots, v_{\pi(r)}\}$.}
\label{alg:segment_cost_data_structure}
\begin{algorithmic}[1]
\Function{\Initialize}{$G$, $\pi$}
    \State $\texttt{work\_struct} \gets \texttt{InitWorkStruct}(G, \pi)$
    \State $\texttt{mem\_struct} \gets \texttt{InitMemStruct}(G, \pi)$
    \State $\texttt{io\_struct} \gets \texttt{InitIOStruct}(G, \pi)$
\EndFunction

\Function{\Query}{$\ell$, $r$}
    \State Query $\cost \gets \texttt{work\_struct}[r] - \texttt{work\_struct}[\ell - 1]$
    \State Query $\size \gets \texttt{mem\_struct}[r] - \texttt{mem\_struct}[\ell - 1]$
    \State Set $\texttt{overflow} \gets \size + \peakmem([v_{\pi(\ell)}, v_{\pi(\ell + 1)}, \dots, v_{\pi(r)}]) - \blockmem$
    \State Update $\texttt{overflow} \gets \frac{1}{B}\max\left\{\texttt{overflow}, 0\right\}$
    \State Query $\iocost \gets \texttt{io\_struct}[\ell][r]$
    \State \textbf{return} $\cost + \texttt{overflow} + \iocost$
\EndFunction
\end{algorithmic}
\end{algorithm}

In \Cref{alg:segment_cost_helpers},
we use the fact that $\pi \in \mathfrak{S}_{V}$ is a permutation of the vertices
and that $\pi^{-1} : V \rightarrow [n]$ tells us the index at which
a given node appears in the topological order.

Now we demonstrate how the preprocessing time can be reduced using
subrectangle range updates to subtract $\outputsize(u)$ from disjoint
regions of the two-dimensional $\texttt{io\_struct[][]}$ array.

\begin{algorithm}[H]
\caption{Segment cost data structure helper functions.}
\label{alg:segment_cost_helpers}
\begin{algorithmic}[1]
\Function{\texttt{InitWorkStruct}}{$G$, $\pi$}
\\ \MyComment{Builds 1D array of $\cost$ prefix sums}
    \State Initialize $\texttt{work\_struct} \gets \mat{0}_{n}$
    \State $\texttt{work\_struct}[1] \gets \cost(\pi(1))$
    \For{$i=2$ to $n$}
        \State $\texttt{work\_struct}[i] \gets \texttt{work\_struct}[i-1] + \cost(\pi(i))$
    \EndFor
    \State \textbf{return} \texttt{work\_struct}
\EndFunction

\Function{\texttt{InitMemStruct}}{$G$, $\pi$}
\\ \MyComment{Builds 1D array of $\size$ prefix sums}
    \State Initialize $\texttt{mem\_struct} \gets \mat{0}_{n}$
    \State $\texttt{mem\_struct}[1] \gets \size(\pi(1))$
    \For{$i=2$ to $n$}
        \State $\texttt{mem\_struct}[i] \gets \texttt{mem\_struct}[i-1] + \size(\pi(i))$
    \EndFor
    \State \textbf{return} \texttt{mem\_struct}
\EndFunction

\Function{\texttt{InitIOStruct}}{$G$, $\pi$}
\\ \MyComment{Builds 2D array of $\iocost$ segment costs}
    \State Compute $\texttt{total} \gets \sum_{v \in V} \outputsize(v)$
    \State Initialize $n \times n$ array \texttt{io\_struct} with \texttt{total}
    \For{$u \in V$} \phantom{+++} \MyComment{Remove $\outputsize(u)$ from eligible segments}
        \State Let $S = (v_1, v_2, \dots, v_{d})$ be the nodes in $\{u\} \cup N^+(u)$
        sorted s.t.\ $\pi^{-1}(v_{i}) < \pi^{-1}(v_{i+1})$
        \For{$1 \le \ell \le r < \pi^{-1}(v_1)$} 
            \State $\texttt{io\_struct}[\ell][r] \gets \texttt{io\_struct}[\ell][r] - \outputsize(u)$
        \EndFor
        \For{$i=1$ to $d-1$}
            \For{$\pi^{-1}(v_{i}) < \ell \le r < \pi^{-1}(v_{i+1})$}
                \State $\texttt{io\_struct}[\ell][r] \gets \texttt{io\_struct}[\ell][r] - \outputsize(u)$
            \EndFor
        \EndFor
        \For{$\pi^{-1}(v_{d}) < \ell \le r \le n$}
            \State $\texttt{io\_struct}[\ell][r] \gets \texttt{io\_struct}[\ell][r] - \outputsize(u)$
        \EndFor
        \For{$1 \le \ell \le \pi^{-1}(v_1)$} \phantom{+++} \MyComment{Segments fully containing $\{v_1, v_d\}$}
            \For{$\pi^{-1}(v_{d}) \le r \le n$}
                \State $\texttt{io\_struct}[\ell][r] \gets \texttt{io\_struct}[\ell][r] - \outputsize(u)$
            \EndFor
        \EndFor
    \EndFor
    \State \textbf{return} \texttt{io\_struct}
\EndFunction
\end{algorithmic}
\end{algorithm}

\LemmaSegmentCostDataStructure*

\begin{proof}
We use a two-dimensional Fenwick tree~\citep{mishra2013new} to implement the
$\texttt{io\_struct}$ array.
This data structure needs $O(n^2)$ time and space to initialize as $\mat{0}_{n \times n}$.
It also supports the operation
$\texttt{Update}(p, q, x)$ where $p = (i_1, j_1)$ and $q = (i_2, j_2)$
define two corners of a rectangular sub-array, and adds $x$ to all entries
$\texttt{io\_struct}[i][j]$ for all $(i, j) \in [i_1, i_2] \times [j_1, j_2]$
in time $O(\log^2(n))$.
Therefore, we can first initialize all entries to $\texttt{total}$ in $O(\log^2(n))$ time.

For each $u \in V$, we describe how to update $\texttt{io\_struct}$ in
$O(\deg^+(u) \log^2(n))$ time.
First, observe that there are $\deg^+(u) + 2$ updates in Lines~17--24 of the form
$i \le \ell \le r \le j$. It follows that we can call
$\texttt{Update}((\ell, \ell), (r,r), -\outputsize(u))$
to correctly update the Fenwick tree.
Note that this updates entries that will never be queried (i.e., when $\ell > r$),
but this is not a problem.
Finally, we call
$\texttt{Update}((1, \pi^{-1}(u)), (\pi^{-1}(v_d),n), -\outputsize(u))$ to 
update segments that fully contain $\{u, v_d\}$.
Putting everything together, the total running time to maintain
$\texttt{io\_struct}$ as a two-dimensional Fenwick tree is
\[
    O\parens*{n^2} + \sum_{u \in V} O\parens*{\deg^{+}(u) \log^{2}(n)}
    =
    O\parens*{n^2 + m \log^{2}(n)}.
\]
Since each element-wise query takes $O(\log^2(n))$ time in isolation, we use the
fact that we can iterate over all $O(n^2)$ entries of the Fenwick tree
and write the values of $\texttt{io\_struct}[\ell][r]$ in a separate two-dimensional array
in amortized $O(n^2)$ time.
This allows us to achieve $O(1)$ time queries for the segment cost data structure after initialization.
Correctness follows from \citet{mishra2013new} and \Cref{warmup:warmup}.
\end{proof}

\subsection{Analysis of the \SliceGraph algorithm}

\LemmaSliceGraph*

\begin{proof}
For any topological order $\pi \in \mathfrak{S}_V$, 
initialize $\texttt{segment\_cost}$
for $G(\pi)$ in $O(n^2 + m \log^2(n))$ time and $O(n^2)$ space by \Cref{lem:segment_cost_data_structure}.
Then, run the dynamic programming algorithm
$\Solve(\texttt{segment\_cost}, n, k)$ on the full topological order $\pi$,
which can be done in $O(n^2 k)$ time and $O(n^2)$ space
since there are $O(nk)$ states and each
state can be computed recursively in $O(n)$ time after preprocessing all $[\ell, r]$ segment costs.
This gives an optimal stars-and-bars
partition of $\pi$ into $k$ (possibly empty) blocks for the \MTPP objective in
Eq.~\Cref{eqn:optimization_problem}, which proves the result.
\end{proof}

\Main*

\begin{proof}
Let $P^*$ be an optimal \MTPP partition of the nodes,
and let $Q = (P^*, E')$ be the induced acyclic quotient graph on the blocks.
Let $\sigma$ be a topological ordering of the blocks $P^*$ in $Q$.
Then, set $x^*_{v} \gets k - \sigma^{-1}[[v]_{P^*}]$ for every $v \in V$,
where $[v]_{P}$ denotes the block index $i \in [k]$ of the partition $P=\{P_1, P_2, \dots, P_k\}$
and $\sigma^{-1}$ is the inverse permutation of $\sigma \in \mathfrak{S}_{P^*}$.
This means nodes appearing in the first block of $P^*$ according to $\sigma$ have highest priority.
Running Kahn's algorithm with $\mat{x}^*$ recovers
a topological order $\pi^* \in \mathfrak{S}_{V}$ such that when optimally segmented into $k$ blocks,
has an objective value that is equal to partition $P^*$.
\end{proof}

Even with optimal topological order slicing,
there exist worst-case instances $(k, G, \pi)$, for any $k \ge 2$,
that can be as bad as the trivial $\MTPP$ algorithm
that puts all nodes into the same block.

\begin{restatable}{lemma}{HardInstance}
\label{lem:hard_instance}
For any $k \ge 2$,
there is a computation graph $G$ and topological order $\pi$
such that when optimally sliced,
$\SliceGraph(G, k, \pi) = k \cdot \OPT$.
\end{restatable}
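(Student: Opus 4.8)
The plan is to exhibit a single parameterized family of instances in which a carefully chosen adversarial topological order $\pi$ is forced by communication costs to collapse into a single block, while the unconstrained optimum spreads the work evenly across all $k$ blocks. The key idea is to plant one enormously expensive tensor edge between the \emph{first} and \emph{last} nodes of $\pi$: any contiguous segmentation of $\pi$ into two or more nonempty blocks must sever this edge, and we choose its \outputsize large enough that the induced \iocost alone equals the total work.

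Concretely, I would take $n = k+2$ nodes: a source $s$, a sink $t$, and $k$ middle nodes $v_1, \dots, v_k$. Set $\cost(v_i) = 1$ for each $i$ and $\cost(s) = \cost(t) = 0$, so the total work is $W = k$. Set $\size(\cdot) = 0$ and take the fast-memory budget $M$ large enough that $\overflowcost$ vanishes on every subset, so that $f$ in \Cref{eqn:full-block-cost} reduces to node work plus \iocost. The only edge is $(s,t)$, with $\outputsize(s) = \Omega$ for some $\Omega \ge kB$; every other node is isolated. Finally, choose $\pi = (s, v_1, \dots, v_k, t)$, which is a valid topological order since the lone edge $(s,t)$ points forward.

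The heart of the argument is then a pair of matching bounds. For $\OPT$: placing $\{s, v_1, t\}$ in one block and each remaining $v_i$ in its own block gives every block work exactly $1$, leaves the edge $(s,t)$ internal (so no tensor is cut and \iocost $=0$), and yields an edgeless—hence acyclic—quotient graph, so this is a feasible partition in $\cP_k(G)$ with bottleneck $1$; and \Cref{lem:simple_lower_bound} gives $\OPT \ge \frac{1}{k}\sum_v \cost(v) = 1$, so $\OPT = 1$. For $\SliceGraph(G,k,\pi)$: the degenerate slice putting all nodes in one block (allowed since blocks may be empty) has cost $f(V) = W = k$, as no tensor crosses a boundary. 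Conversely, in \emph{any} slice with at least two nonempty contiguous blocks, $s = \pi(1)$ lies in the first block and $t = \pi(n)$ lies in the last, so the edge $(s,t)$ is cut and contributes $\outputsize(s)/B = \Omega/B \ge k$ to the \iocost of the block containing $s$ (via \Cref{eqn:iocost_def} applied to output tensors). Thus every nontrivial slice has a block of cost $\ge k$, and the optimal slice value is exactly $k = k \cdot \OPT$.

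The main obstacle is not difficulty but the two-sided tightness: I must confirm both that the adversarial order forces cost $k$ from below (handled by the single planted edge, which spans \emph{every} cut position of $\pi$) and that the genuine optimum simultaneously co-locates $s$ and $t$ while keeping the blocks perfectly balanced and the quotient graph acyclic. Using zero-cost endpoints $s,t$ and unit-cost isolated middle nodes makes the balancing immediate, and pinning the lower bound on $\OPT$ via \Cref{lem:simple_lower_bound} sidesteps any delicate optimization. I would also double-check that the ``at most $k$ blocks'' convention legitimately permits the single-block slice in \Cref{alg:dynamic_programming}, so that the value $k$ is genuinely attained rather than merely approached.
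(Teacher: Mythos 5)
Your proposal is correct and uses essentially the same construction as the paper: a single prohibitively expensive tensor edge whose endpoints sit at the first and last positions of the adversarial topological order, so every nontrivial slice must cut it, while the true optimum co-locates those endpoints and balances the remaining work across $k$ blocks (the paper realizes this with $2k$ heavy/light node pairs and edge $(1,k+1)$ rather than zero-cost source/sink nodes, but the mechanism is identical). Both arguments pin $\OPT=1$ the same way and conclude that the best slicing of $\pi$ is the single-block partition of cost $k$.
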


\begin{proof}
Consider a graph with $n = 2k$ nodes of two types:
nodes $\{1,2,\dots,k\}$ are \emph{heavy} with weight $\cost(v_i) = 1-\varepsilon$, and
nodes $\{k+1,k+2,\dots,2k\}$ are \emph{light} with weight $\cost(v_i) = \varepsilon$.
Add one directed edge $(1, k+1)$ with weight $10k$.
Clearly $\OPT = 1$ since~$G$ can be partitioned as
$P^* = \{\{1, k+1\}, \{2, k+2\}, \dots, \{k, 2k\}\}$,
which means $f(P_i^*) = (1 - \varepsilon) + \varepsilon = 1$ for all $i \in [k]$
since each block has no incoming or outgoing edges.

Now consider the topological order $\pi=(1,2,\dots,k,2k,2k-1,\dots,k+1)$.
Any stars-and-bars partition $P$ with an internal separator must cut the edge $(1,k+1)$,
which means $\max_{i \in [k]} \{f(P_i)\} \ge 10k$.
Therefore, the optimal slicing of $\pi$
groups all nodes into the same block and has objective value
$k(1 - \varepsilon) + k\varepsilon = k$.
\end{proof}

\begin{remark}
Going a step further, consider graph $G$ above without edge $(1,k+1)$.
Any topological sort-based algorithm for $k$ blocks
can be as bad as $\frac{2k}{k+1} \cdot \OPT$
since any stars-and-bars partition must have an $\MTPP$ objective value
of at least $\max\{2(1-\varepsilon), (1-\varepsilon) + k\varepsilon\}$.
To see this, observe that either two heavy nodes must be in the same block
or there is a divider between each pair of adjacent heavy nodes.
Setting $\varepsilon = \frac{1}{k+1}$ proves the claim.
\end{remark}

\section{Missing details and experiments for \Cref{sec:experiments}}
\label{app:experiments}

\subsection{Partitioning algorithms}
\label{app:primal-heuristics}

We now describe the partitioning algorithms used in our experiments to compute graph partitions for the \MTPP problem:

\begin{itemize}
\item \textbf{Random~~}
This random topological sort algorithm samples $T$ i.i.d.\ weight vectors $\mat{x}^{(t)} \in [0,1]^n$ where $x^{(t)}_i \sim U(0,1)$,
maps them to topological orders $\pi^{(t)}$ as described in \Cref{sec:searching},
and returns $\min_{t \in [T]} \SliceGraph(G,\pi^{(t)}, k)$.

\item \textbf{BRKGA~~}
We run BRKGA with \texttt{BrkgaSortAndSliceDecoder} (\Cref{alg:decoder}),
and we set the population to size $100$ and the number of generations to $100$,
for $10^4$ total candidate evaluations,
which we denote as \texttt{brkga-10000} in \Cref{tab:prod_simple_approximation_ratios}.
\texttt{brkga-100} sets the population size to $10$ and the
number of generations to $10$, for $100$ total candidate evaluations.

\item \textbf{MLA~~}
A \emph{minimum linear arrangement} (MLA)
of an undirected graph $G=(V,E,w)$
is a node permutation $\pi \in \mathfrak{S}_V$
minimizing the objective
\begin{align*}
\label{eqn:mla_objective}
    h(\pi) = \sum_{\{u,v\} \in E} w(u, v) \cdot \abs*{\pi^{-1}(u) - \pi^{-1}(v)}.
\end{align*}
To generalize this idea to computation graphs,
we restrict the search by setting $h(\pi) = \infty$ if $\pi$ is not a topological order,
and setting $w(u,v) = 1$ for the (unweighted) \texttt{mla} objective
or $w(u,v) = \iocost(u, v)$ for the \texttt{mla-weighted} objective.
MLA is also an NP-hard problem, so we use BRKGA to optimize this objective
(similar to \Cref{alg:decoder}), but now the fitness is $h(\pi(\mat{x}))$.
We set the population to size $100$ and the number of generations to $100$, for a total of $10^4$ candidate evaluations.
We observed that there is a clear benefit to using \texttt{mla-weighted} over \texttt{mla} for $\MTPP$, which agrees with intuition.
Even though MLA orderings optimize for a different objective, they are still competitive, especially for being a zero-shot heuristic.
\end{itemize}

We compare the quality of these linear ordering heuristics
for \SliceGraph on the production data in \Cref{tab:prod_simple_approximation_ratios}.

\begin{table}[b!]
  \vspace{-0.5cm}
  \caption{Geometric means of approximation ratio upper bounds $\SliceGraph(G,k,\pi)/L(G,k)$ for the production models.
  Compares the \MTPP objective value induced by different linear ordering algorithms
  relative to the simple lower bound in \Cref{lem:simple_lower_bound} (lower is better).
  The $\texttt{random-}T$ heuristic generates $T$ i.i.d.\ node-weight vectors
  $\mat{x}^{(t)}$ and returns the best solution found.
  }
  \label{tab:prod_simple_approximation_ratios}
  \centering
  \vspace{0.1cm}
  \begin{tabular}{lcccccccccccccccccccccccc}
    \toprule
    Algorithm & $k=2$ & $k=4$ & $k=8$ & $k=16$ & $k=32$ & $k=64$ \\
    \midrule
    \texttt{mla} & 1.216 & 1.538 & 1.950 & 2.224 & 2.304 & 2.326 \\
    \texttt{mla-weighted} & 1.206 & 1.516 & 1.920 & 2.182 & 2.258 & 2.283 \\
    \midrule
    \texttt{random-1} & 1.220 & 1.551 & 1.959 & 2.230 & 2.309 & 2.333 \\
    \texttt{random-100} & 1.201 & 1.512 & 1.916 & 2.179 & 2.258 & 2.282 \\
    \texttt{random-10000} & \textbf{1.199} & \textbf{1.509} & 1.911 & 2.176 & 2.256 & 2.281 \\
    \midrule
    \texttt{brkga-100}  & 1.201 & 1.516 & 1.919 & 2.184 & 2.263 & 2.288 \\
    \texttt{brkga-10000}  & \textbf{1.199} & \textbf{1.509} & \textbf{1.910} & \textbf{2.175} & \textbf{2.255} & \textbf{2.272} \\
    \bottomrule
  \end{tabular}
\end{table}

\subsection{REGAL}
\label{sec:regal}

\paragraph{Dataset}
The synthetic computation graphs from \citet[Appendix A.1.4]{paliwal2019reinforced}
are constructed as follows.
The base graphs are sampled from a set of classic random graph model
(see Table 2 therein for the parameters of each random graph model).
The graphs have $50 \le n \le 200$ nodes, and are converted to directed acyclic graphs
via a random topological order.
The size of each tensor is sampled from the normal distribution
$\mathcal{N}(50, 10)$.
Each node cost is the sum of its input and output tensor
costs plus a random fraction $r$ of the total memory cost
(i.e., the sum of all tensor sizes),
where $r \sim \mathcal{N}(0, 0.1)$.
If a node has more than one output tensor, we use the lexicographically
least according to the tensor index.
Finally, \citet{paliwal2019reinforced} filter these graphs and only keep
those that are sufficiently hard for their min-peak scheduling objective.

\paragraph{Results}
We report a parallel set of results on the REGAL dataset
(see \Cref{fig:regal_running_times} and \Cref{tab:regal_scaled_lower_bounds})
to accompany our running time plots and table of lower bound ratios
for the production models in \Cref{sec:experiments}.

\begin{table}[H]
  \vspace{-0.5cm}
  \caption{Geometric means of the best available lower bound from the MIP hierarchy,
  normalized by the best solution found using BKRGA, across the REGAL dataset.}
  \label{tab:regal_scaled_lower_bounds}
  \centering
  \vspace{0.2cm}
  \begin{tabular}{lcccccccccccccccc}
    \toprule
    Lower bound & $k=2$ & $k=4$ & $k=8$ & $k=16$ & $k=32$ & $k=64$ \\
    \midrule
    \simple (\Cref{lem:simple_lower_bound}) & 0.9579 & 0.8795 & 0.7911 & 0.5821 & 0.4087 & 0.3336 \\
    \bottleneck & 0.9794 & 0.8818 & 0.7918 & 0.5824 & 0.4090 & 0.3338 \\
    \bottleguess & 0.9804 & 0.8826 & 0.7918 & 0.5824 & 0.4090 & 0.3338 \\
    \exact & 0.9804 & 0.9579 & 0.9407 & 0.8929 & 0.5910 & 0.3810 \\
    \bottomrule
  \end{tabular}
\end{table}

\begin{figure*}
\centering
\includegraphics[width=1.0\textwidth]{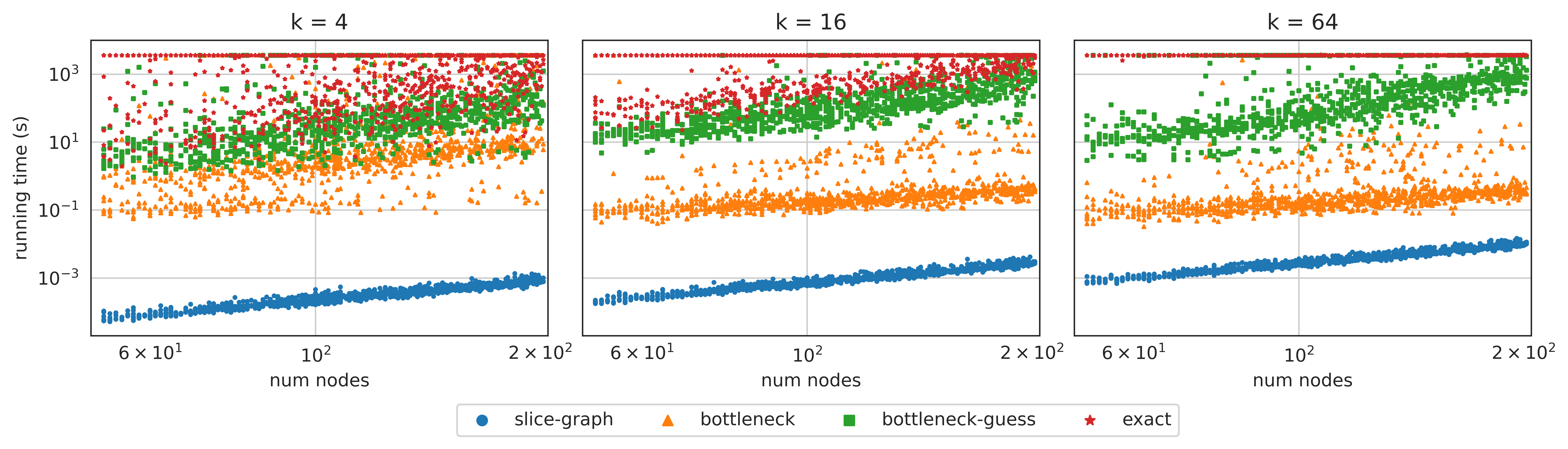}
\vspace{-0.75cm}
\caption{Running times of $\SliceGraph$ and different MIP lower bound computations across the REGAL models. Each point denotes a run for one graph, color-coded to denote $\SliceGraph$ partitioning vs.\ \bottleneck, \bottleguess, and \exact lower bounds. The \bottleguess times are summed across all $k$ MIP instances involved. Each plot is for a different value of $k$.
In order to facilitate visual comparisons across the plots, all three employ the same $y$-axis.
Some of the data tops out at 3600 seconds
since that is where we set the MIP time limit.
}
\vspace{-0.3cm}
\label{fig:regal_running_times}
\end{figure*}

\subsection{Solving the MIPs}
\label{app:mip_solving}

To solve the MIPs that underpin \Cref{tab:prod_scaled_lower_bounds} and \Cref{tab:regal_scaled_lower_bounds}, we used a combination of the Gurobi~\citep{gurobi} and SCIP~\citep{BestuzhevaEtal2021OO} solvers. 
For \texttt{bottleneck}, we used Gurobi with a 15-minute time limit,
and for \texttt{exact}, we relaxed this to 60 minutes because these MIPs are tougher to solve.
For technical reasons having to do with our computing setup, we instead used SCIP for \bottleguess, with a 60-minute time budget that was shared across all values of $k$ of the MIPs that compose a single \bottleguess instance (one MIP per guess). A non-trivial fraction of the instances failed to solve to provable optimality within the time limit, especially for \texttt{exact} with $k = 64$. In these cases, the solver still returns a valid lower bound, and we use that.

\end{document}